\renewcommand{\S}{\mathcal{S}}
\newcommand{\A}{\mathcal{A}}
\renewcommand{\O}{\mathcal{O}}
\renewcommand{\KL}{\mathbb{D}}
\newcommand{\fb}{\mathbb{D}_{f,\mathcal{B}}}
\newcommand{\finv}{\mathbb{D}_f^{-1}}
\newcommand{\I}{\mathbb{I}}
\newcommand{\revision}[1]{\textcolor{black}{#1}}
\newcommand{\revisionfig}{}
\theoremstyle{plain} 
\newtheorem{theorem}{Theorem}
\newtheorem{remark}{Remark}
\newtheorem{definition}{Definition}
\newtheorem{assumption}{Assumption}
\newcommand\BibTeX{{\rmfamily B\kern-.05em \textsc{i\kern-.025em b}\kern-.08em
T\kern-.1667em\lower.7ex\hbox{E}\kern-.125emX}}
\begin{document}


\title{Fundamental Limits for Sensor-Based \\ Robot Control}


\author{\authorblockN{Anirudha Majumdar, Zhiting Mei, and Vincent Pacelli} 
\authorblockA{\small{Department of Mechanical and Aerospace Engineering}\\
\small{Princeton University, Princeton, NJ, 08540}\\
\small{Emails: \{ani.majumdar, zm2074, vpacelli\}@princeton.edu}}}

\maketitle

\begin{abstract}
Our goal is to develop theory and algorithms for establishing \emph{fundamental limits} on performance imposed by a robot's sensors for a given task. In order to achieve this, we define a quantity that captures the amount of \emph{task-relevant information} provided by a sensor. Using a novel version of the generalized Fano inequality from information theory, we demonstrate that this quantity provides an upper bound on the highest achievable expected reward for one-step decision making tasks. We then extend this bound to multi-step problems via a dynamic programming approach. We present algorithms for numerically computing the resulting bounds, and demonstrate our approach on three examples: (i) the lava problem from the literature on partially observable Markov decision processes, (ii) an example with continuous state and observation spaces corresponding to a robot catching a freely-falling object, and (iii) obstacle avoidance using a depth sensor with non-Gaussian noise. We demonstrate the ability of our approach to establish strong limits on achievable performance for these problems by comparing our upper bounds with achievable lower bounds (computed by synthesizing or learning concrete control policies).  
\end{abstract}

\IEEEpeerreviewmaketitle

\vspace{-5pt}
\section{Introduction}
\label{sec:intro}


Robotics is often characterized as the problem of transforming ``pixels to torques" \citep{Brock11}: how can an embodied agent convert raw sensor inputs into actions in order to accomplish a given task? In this paper, we seek to understand the \emph{fundamental limits} of this process by studying the following question: is there an \emph{upper bound} on performance imposed by the sensors that a robot is equipped with? 

As a motivating example, consider the recent debate around the ``camera-only" approach to autonomous driving favored by Tesla versus the ``sensor-rich" philosophy pursued by Waymo~\citep{Morris21}. Is an autonomous vehicle equipped only with cameras \emph{fundamentally limited} in terms of the \revision{collision rate} it can achieve? 
By ``fundamental limit", we mean a bound on performance \revision{or safety} on a given task that holds \emph{regardless} of the form of control policy one utilizes (e.g., a neural network with billions of parameters \revision{or} a nonlinear model predictive control scheme combined with a particle filter), how the policy is synthesized (e.g., via model-free reinforcement learning \revision{or} model-based control), or how much computation is available to the robot or software designer. 

While there have been tremendous algorithmic advancements in robotics over decades, we currently lack a ``science" for understanding such fundamental limits \citep{Koditschek21}.
Current practice in robotics is often \emph{empirical} in nature. \revision{For example, practitioners today often implement a process of trial and error with different perception and control architectures that use neural networks of varying sizes and architectures}. Techniques for establishing fundamental limits imposed by a sensor would potentially allow us to glean important design insights \revision{such as} realizing that a particular sensor is not sufficient for a task and must be replaced. Further, such techniques could allow us to establish the superiority of one suite of sensors over another from the perspective of a given task; \revision{this may be achieved} by synthesizing a control policy for one sensor suite that achieves better performance than the fundamental bound for another suite.

\begin{figure}[t]
\revisionfig
\begin{center}
\vspace{-3pt}
\includegraphics[width=0.99\columnwidth]{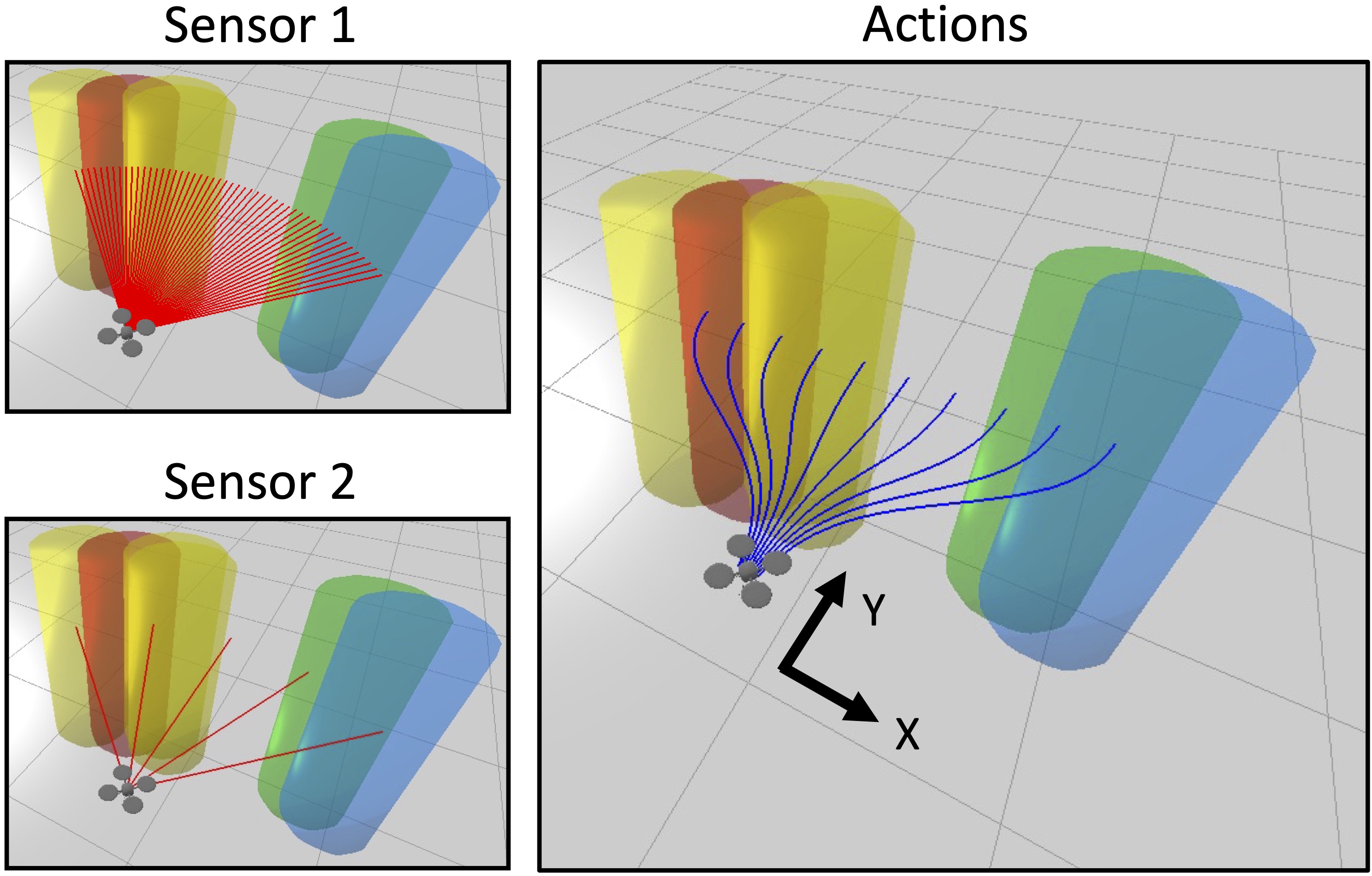}
\end{center} 
\vspace{-5pt}
\caption{\revision{Our goal is to establish fundamental limits on performance for a given task imposed by a robot's sensors. We propose a quantity that captures the amount of \emph{task-relevant information} provided by a sensor and use it to bound the highest achievable performance (expected reward) using the sensor. We demonstrate our approach on examples that include obstacle avoidance with a noisy depth sensor (left figure) using motion primitives (right figure). Our framework also allows us to establish the superiority of one sensor (Sensor 1: a dense depth sensor) over another (Sensor 2: a sparse depth sensor) for a given task.    \label{fig:anchor}}}
\vspace{-10pt}
\end{figure}

In this paper, we take a step towards \revision{these goals}. We \revision{consider settings where the robot's task performance or safety may be quantified via a reward function.} We then observe that
any technique for establishing fundamental bounds on performance imposed by a given sensor must take into account two factors: (i) the \emph{quality} of the sensor \revision{as measured by} the amount of information about the state of the robot and its environment provided by the sensor, and, importantly, (ii) the \emph{task} that the robot is meant to accomplish. As an example, consider a drone equipped with a (noisy) depth sensor (Figure~\ref{fig:anchor}).
Depending on the nature of the task, the robot may need more or less information from its sensors. 
For example, suppose that the obstacle locations are highly constrained such that a particular sequence of actions always succeeds in avoiding them (i.e., there is a purely open-loop policy that achieves good performance on the task); in this case, even an extremely noisy or sparse depth sensor allows the robot to perform well. However, if the distribution of obstacles is such that there is no pre-defined gap in the obstacles, then a noisy or sparse depth sensor may fundamentally limit the achievable performance on the task. The achievable performance is thus intuitively influenced by the amount of \emph{task-relevant information} provided by the robot's sensors. 

{\bf Statement of contributions.} Our primary contribution is to develop theory and algorithms for establishing fundamental bounds on performance imposed by a robot's sensors for a given task. Our key insight is to define a quantity that captures the \emph{task-relevant information} provided by the robot's sensors. Using a novel version of the \emph{generalized Fano's inequality} from information theory, we demonstrate that this quantity provides a fundamental upper bound on expected reward for one-step decision making problems. We then extend this bound to multi-step settings via a dynamic programming approach and propose algorithms for computing the resulting bounds for systems with potentially continuous state and observation spaces, nonlinear and stochastic dynamics, and non-Gaussian sensor models (but with discretized action spaces). 
We demonstrate our approach on three examples: (i) the lava problem from the literature on partially observable Markov decision processes (POMDPs), (ii) a robot catching a freely-falling object, and (iii) obstacle avoidance using a depth sensor (Figure \ref{fig:anchor}). We demonstrate the strength of our \revision{upper bounds on performance by comparing them against \emph{lower bounds} on the best achievable performance obtained from concrete control policies}: the optimal POMDP solution for the lava problem, a model-predictive control (MPC) scheme for the catching example, and a learned neural network policy \revision{and heuristic planner} for the obstacle avoidance problem. We also present applications of our approach for establishing the superiority of one sensor over another from the perspective of a given task.  To our knowledge, the results in this paper are the first to provide general-purpose techniques for establishing fundamental bounds on performance for sensor-based control of robots. 

A preliminary version of this work was published in the proceedings of the Robotics: Science and Systems (RSS) conference \citep{Majumdar22}. In this significantly extended and revised version, we additionally present: (i) the extension of the definition of the task-relevant information potential (TRIP) from using KL-divergence only to the more general $f$-divergence (Section \ref{sec:one-step}), (ii) the generalization of the single-step performance bound with the extended TRIP definition (Theorem \ref{thm:single-step bound}), (iii) the generalization of the upper bound of performance for multi-step problems (Theorem \ref{thm:multi-step bound}), (iv) the application of the generalized bounds to the multi-step lava problem (Section \ref{sec:lava problem}), (v) a method for optimizing the upper bound by varying the function used to define the $f$-divergence in the multi-step lava problem (Section \ref{sec:lava problem}), and (vi) results demonstrating that our novel version of the generalized Fano's inequality results in tighter bounds as compared to the original generalized Fano's inequality \citep{Chen16, Gerchinovitz20} (Section \ref{sec:lava problem}).

\subsection{Related Work}
\label{sec:related work}

{\bf Domain-specific performance bounds.} 
Prior work in robotics has established fundamental bounds on performance for particular problems. For example, \cite{Karaman12} and \cite{Choudhury15} consider high-speed navigation through an ergodic forest consisting of randomly-placed obstacles. Results from percolation theory~\citep{Bollobas06} are used to establish a critical speed beyond which there does not exist (with probability one) an infinite collision-free trajectory. 
The work by \cite{Falanga19} establishes limits on the speed at which a robot can navigate through unknown environments in terms of perceptual latency. Classical techniques from robust control \citep{Doyle13} have also been utilized to establish fundamental limits on performance for control tasks (e.g., pole balancing) involving linear output-feedback control and sensor noise or delays \citep{Leong16}. The results obtained by \cite{Xu21} demonstrate empirical correlation of the complexity metrics presented in the work by \cite{Leong16} with sample efficiency and performance of learned perception-based controllers on a pole-balancing task. The approaches mentioned above consider specific tasks \revision{such as} navigation in ergodic forests, or relatively narrow classes of problems \revision{such as} linear output-feedback control. In contrast, our goal is to develop a general and broadly applicable theoretical and algorithmic framework for establishing fundamental bounds on performance imposed by a sensor for a given task. 

{\bf Comparing sensors.} 
The notion of a \emph{sensor lattice} was introduced by \cite{Lavalle12, Lavalle19} for comparing the power of different sensors. \revision{The works by} \cite{OKane08} and \cite{saberifar_toward_2019} \revision{present} similar approaches for comparing robots, sensors, and actuators. The sensor lattice provides a partial ordering on different sensors based on the ability of one sensor to simulate another. However, most pairs of sensors are \emph{incomparable} using such a scheme. Moreover, the sensor lattice does not establish the superiority of one sensor over another from the perspective of a given task; instead, the partial ordering is based on the ability of one sensor to perform as well as another in terms of filtering (i.e., state estimation). In this paper, we also demonstrate the applicability of our approach for comparing different sensors. However, this comparison is \emph{task-driven}; we demonstrate how one sensor can be proved to be fundamentally better than another from the perspective of a given task, without needing to estimate states irrelevant to the task.

{\bf Fano's inequality and its extensions.} 
In its original form, \emph{Fano's inequality} \citep{Cover99} relates the lowest achievable error of estimating a signal $x$ from an observation $y$ in terms of the noise in the channel that produces observations from signals. In recent years, Fano's inequality has been significantly extended and applied for establishing fundamental limits for various statistical estimation problems, e.g., lower bounding the Bayes and minimax risks for different learning problems \citep{Duchi13, Chen16, Gerchinovitz20}. In this paper, we build on generalized versions of Fano's inequality \citep{Chen16, Gerchinovitz20} in order to obtain fundamental bounds on performance for robotic systems with noisy sensors. On the technical front, we contribute by deriving a stronger version of the generalized Fano's inequalities presented by \cite{Chen16, Gerchinovitz20} by utilizing the \emph{inverse of the $f$-divergence} (Section~\ref{sec:background}) and computing it using \emph{convex programming} \citep[Ch. 4]{Boyd04}. The resulting inequality, which may be of independent interest, allows us to derive fundamental upper bounds on performance for one-step decision making problems. We then develop a dynamic programming approach for recursively applying the generalized Fano inequality in order to derive bounds on performance for multi-step problems. 




\section{Problem Formulation}
\label{sec:problem formulation}

\subsection{Notation}
\label{sec:notation}

We denote sequences by $x_{i:j} := (x_k)_{k = i}^{j}$ for $i \leq j$. \revision{We use abbreviations inf (infimum) and sup (supremum), and also use the abbreviations LHS (left hand side) and RHS (right hand side) for inequalities.} \revision{Conditional distributions are denoted as $p(x|y)$. } Expectations are denoted as $\mathbb E[\cdot]$ with the variable of integration or its measure appearing below it for contextual emphasis, e.g.: $\underset{x}{\mathbb E}[\cdot], \underset{p(x)}{\mathbb E}[\cdot]$. Expectations with multiple random variables are denoted as $\underset{x,y}{\mathbb E}[\cdot]$ or $\underset{p(x), p(y)}{\mathbb E}[\cdot]$, while conditional expectations are denoted as $\underset{x|y}{\mathbb E}[\cdot]$ or $\underset{p(x|y)}{\mathbb E}[\cdot]$.



\subsection{Problem Statement}

We denote the state of the robot and its environment at time-step $t$ by $s_t \in \S$. Let $p_0$ denote the initial state distribution. Let the robot's sensor observation and control action at time-step $t$ be denoted by $o_t \in \O$ and $a_t \in \A$ respectively. Denote the stochastic dynamics of the state by $p_t(s_t|s_{t-1}, a_{t-1})$ and suppose that the robot's sensor is described by $\sigma_t(o_t|s_t)$. \revision{We note that this formulation handles multiple sensors by concatenating the outputs of different sensors into $o_t$}.
\revision{In this work, we assume that the robot's task is prescribed using reward functions $r_0, r_1, \dots, r_{T-1}: \S \times \A \rightarrow \mathbb R$ at each time-step (up to a finite horizon). }\revision{We use $R$ to represent cumulative expected reward over a time horizon denoted with a subscript. In subsequent sections, we use superscript $\star$ to denote optimality, and use superscript $\perp$ to represent reward achieved by open-loop policies.}

\begin{assumption}[Bounded rewards]
\label{ass:bounded rewards}
We assume that rewards are bounded, and without further loss of generality we assume that $r_t(s_t, a_t) \in [0,1], \ \forall s_t \in \S, a_t \in \A, t  \in \{ 0, \dots, T-1\}$. 
\end{assumption}

The robot's goal is to find a potentially time-varying and history-dependent control policy $\pi_t: \O^{t+1} \rightarrow \A$ that \revision{maps observations $o_{0:t}$ to actions in order to} maximize the total expected reward:
\begin{equation}
\label{eq:control problem}
   \revision{R^\star_{0\to T}} := \underset{\pi_{0:T-1}}{\sup} \ \ \underset{\substack{s_{0:T-1} \\ o_{0:T-1}}}{\mathbb{E}}\Bigg{[}\sum_{t=0}^{T-1} r_t(s_t, \pi_t(o_{0:t}))\Bigg{]}.
\end{equation}
{\bf Goal:} Our goal is to \emph{upper bound} the best achievable expected reward \revision{$R^\star_{0\to T}$} for a given sensor $\sigma_{0:T-1}$. 
We note that we are allowing for completely general policies that are arbitrary time-varying functions of the entire history of observations received up to time $t$ (as long as the functions satisfy measurability conditions that ensure the existence of the expectation in \eqref{eq:control problem}). An upper bound on \revision{$R^\star_{0\to T}$} thus provides a fundamental bound on achievable performance that holds regardless of how the policy is parameterized (e.g., via neural networks \revision{or} receding-horizon control architectures)  or synthesized (e.g., via reinforcement learning \revision{or} optimal control techniques). 

\section{Background}
\label{sec:background}

In this section, we briefly introduce some background material that will be useful throughout the paper. 

\subsection{KL Divergence, \texorpdfstring{$f$}{Lg}-Divergence, and \texorpdfstring{$f$}{Lg}-informativity}

The Kullback-Leibler (KL) divergence between two distributions, $p(x)$ and $q(x)$, is defined as:
\begin{equation}
    \KL(p(x) || q(x)) := \underset{p(x)}{\mathbb E} \Bigg{[} \log{\frac{ p(x)}{q(x)}}  \Bigg{]}. \label{eq: KL}
\end{equation}

This definition can be extended to the more general notion of $f$-divergence:
\begin{equation}
    \KL_f (p(x) \| q(x)) := \underset{q(x)}{\mathbb E} \left[ f\left(\frac{p(x)}{q(x)}\right)  \right], \label{eq: f}
\end{equation}
where $f$ is a convex function on $\mathbb R^+$, and $f(1)=0$. The KL divergence is a special case of the $f$-divergence with $f(x) = x\log x$. 


The $f$-informativity \citep{csiszar_class_1972} between two random variables is defined as:
\begin{align}
    \I_f(x;y) &:= \inf_{q(y)} \underset{p(x)}{\mathbb E} \Bigg{[} \KL_f\big(p(y|x) \| q(y)\big)  \Bigg{]}\label{eq:f-info},
\end{align}
where $p(y|x)$ is the conditional distribution of $y$ on $x$, $q(y)$ is any probability distribution on the random variable $y$, $p(x,y)$ is the joint distribution, and $p(x)$ and $p(y)$ are the resulting marginal distributions. When the subscript $f$ is dropped, $\I$ is simply the Shannon mutual information (i.e., $f(x) = x \log x$ is assumed). The $f$-informativity captures the amount of information obtained about a random variable (e.g., the state $s_t$) by observing another random variable (e.g., sensor observations $o_t$). 

\subsection{Inverting Bounds on the \texorpdfstring{$f$}{Lg}-divergence}
\label{sec:kl inverse}

Let $\mathcal{B}_p$ and $\mathcal{B}_q$ be Bernoulli distributions on $\{0,1\}$ with mean $p$ and $q$ respectively. For $p,q \in [0,1]$, we define:
\begin{equation}
    \mathbb D_{f,\mathcal B}(p \| q) := \mathbb D_f(\mathcal{B}_p \| \mathcal{B}_q) = q f\left( \frac{p}{q}\right) + (1-q) f\left( \frac{1-p}{1-q}\right). \nonumber
\end{equation}

In \revision{Section \ref{sec:one-step}}, we will obtain bounds on \revision{the single-step best achievable expected reward $R_{0\to 1}^\star \in [0,1]$ through bounds that take the form: $\mathbb D_{f,\mathcal B}(R_{0\to 1}^\star \| q) \leq c$ for some $q \in [0,1]$ and an upper bound $c \geq 0$ on $\mathbb D_{f,\mathcal B}(R_{0\to 1}^\star \| q)$}. In order to upper bound \revision{$R_{0\to 1}^\star$}, we will use the \emph{$f$-divergence inverse} (\emph{$f$-inverse} for short):
\begin{equation}
\label{eq:kl inverse}
    \finv(q | c) := \sup \ \{p \in [0,1] \ | \ \mathbb D_{f,\mathcal B}(p \| q) \leq c \}.
\end{equation}
It is then easy to see that $\revision{R_{0\to 1}^\star} \leq \finv(q | c)$. 

Since $\mathbb D_{f,\mathcal B}(\cdot \| \cdot)$ is jointly convex in both arguments \citep{Goldfeld20}, the optimization problem in \eqref{eq:kl inverse} is a convex problem. One can thus compute the $f$-inverse efficiently using a \emph{convex program} \citep[Ch. 4]{Boyd04} with a single decision variable $p$. 
\section{Performance Bound for Single-Step Problems}
\label{sec:one-step}

In this section, we will derive an upper bound on the best achievable reward \revision{$R^\star_{0\to 1}$} in the single time-step decision-making setting. This bound will then be extended to the multi-step setting in Section~\ref{sec:multi-step}. 

When $T=1$, our goal is to upper bound the following quantity: 
\begin{align}
\label{eq:one step objective}
    \revision{R^\star_{0\to 1}}(\sigma_0; r_0) := \ &\underset{\pi_0}{\textrm{sup}} \  \underset{s_0, o_0}{\mathbb{E}}[r_0(s_0, \pi_0(o_0))]  \\
    = \ &\underset{\pi_0}{\textrm{sup}} \  \underset{p_0(s_0)}{\mathbb{E}} \ \underset{\sigma_0(o_0|s_0)}{\mathbb{E}}[r_0(s_0, \pi_0(o_0))].
\end{align}
The notation $\revision{R^\star_{0\to 1}}(\sigma_0; r_0)$ highlights the dependence of the best achievable reward in terms of the robot's sensor and task (as specified by the reward function). As highlighted in Section~\ref{sec:intro}, the amount of information that the robot requires from its sensors in order to obtain high expected reward depends on its task; certain tasks may admit purely open-loop policies that obtain high rewards, while other tasks may require high-precision sensing of the state. We formally define a quantity that captures this intuition and quantifies the \emph{task-relevant information} provided by the robot's sensors. We then demonstrate that this quantity provides an upper bound on $\revision{R^\star_{0\to 1}}(\sigma_0; r_0)$. 

\begin{definition}[Task-relevant information potential]
\label{def:TRI one-step}
Let $\I_f(o_0;s_0)$ be the $f$-informativity between the robot's sensor observation and state. Define:
\begin{equation}
\label{eq:R0 perp}
    \revision{R_{0\to 1}^\perp} := \underset{a_0}{\sup} \  \underset{s_0}{\mathbb{E}}[r_0(s_0, a_0)]
\end{equation}
as the highest achievable reward using an \emph{open-loop} policy. Then define the \emph{task-relevant information potential (TRIP)} of a sensor $\sigma_0$ for a task specified by reward function $r_0$ as:
\begin{equation}
    \tau(\sigma_0; r_0) := \finv(\revision{R_{0\to 1}^\perp} | \I_f(o_0; s_0)).
\end{equation}
\end{definition}

\revision{\begin{remark}
The TRIP depends on the specific choice of $f$ one uses. In Section~\ref{sec:examples}, we will empirically compare the usefulness of different choices of commonly used functions $f$ (Table~\ref{tab:f_func}) from the perspective of establishing fundamental limits.
\end{remark}}

In order to interpret the TRIP, we state two useful properties of the $f$-inverse. 

\begin{restatable}[Monotonicity of $f$-inverse]{proposition}{propmonotone}
\label{prop:f inverse properties}
The $f$-inverse $\finv(q|c)$ is:
\begin{enumerate}
    \item monotonically non-decreasing in $c \geq 0$ for fixed $q \in [0,1]$,
    \item monotonically non-decreasing in $q \in [0,1]$ for fixed $c \geq 0$.
\end{enumerate}
\end{restatable}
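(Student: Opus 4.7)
The plan is to handle the two monotonicity properties separately, since the first is essentially by inspection while the second reduces to a unimodality observation about $\fb(p\|q)$ as a function of $q$.

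For Property~1, I would argue directly from the definition: whenever $c_1 \leq c_2$, the feasible set $\{p \in [0,1] : \fb(p\|q) \leq c\}$ grows (any $p$ satisfying $\fb(p\|q) \leq c_1$ also satisfies $\fb(p\|q) \leq c_2$), so its supremum, which is exactly $\finv(q|c)$, is non-decreasing in $c$.

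For Property~2, the key ingredient I would first establish is that for each fixed $p \in [0,1]$, the map $q \mapsto \fb(p\|q)$ is convex (inherited from the joint convexity of the $f$-divergence, as cited in Section~\ref{sec:kl inverse}) and attains its minimum value $0$ at $q = p$ (since $\fb(p\|p) = 0$ and $\fb \geq 0$). Any convex function with a minimizer is non-increasing to the left of that minimizer and non-decreasing to the right, so in particular for $q_1 \leq q_2 \leq p$ we get $\fb(p\|q_2) \leq \fb(p\|q_1)$. With this ingredient, fix $c \geq 0$ and $q_1 \leq q_2$ and set $p^\star := \finv(q_1|c)$; the aim is to show $\finv(q_2|c) \geq p^\star$. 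Split into two cases. (i) If $p^\star \leq q_2$, then $q_2$ is feasible at $q_2$ since $\fb(q_2\|q_2) = 0 \leq c$, giving $\finv(q_2|c) \geq q_2 \geq p^\star$. (ii) If $p^\star > q_2$, then for every sufficiently small $\varepsilon > 0$ the definition of the supremum yields some $p$ feasible at $q_1$ with $p > p^\star - \varepsilon > q_2 \geq q_1$; the unimodality ingredient then gives $\fb(p\|q_2) \leq \fb(p\|q_1) \leq c$, so $p$ is feasible at $q_2$ and $\finv(q_2|c) \geq p^\star - \varepsilon$. Letting $\varepsilon \to 0$ concludes the case.

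The only genuinely delicate step is the unimodality claim at the boundaries $q \in \{0,1\}$, where $\fb(p\|q)$ can take value $+\infty$ via the usual conventions $q f(p/q) = +\infty$ for $p>0, q=0$, and symmetrically at $q=1$. With those conventions the convexity and the ``zero attained only at $q=p$'' properties still hold, and an infinite value only strengthens the inequalities (always exceeds $c$), so the argument above is unaffected. I therefore expect no substantive obstacle beyond recording these conventions; the bulk of the work is just the careful two-case split for Property~2.
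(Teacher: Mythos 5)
Your proof is correct, and for Property~1 it coincides with the paper's one-line argument. For Property~2 you reach the same key fact as the paper --- that $q \mapsto \fb(p\|q)$ is non-increasing on the region $q \leq p$ --- but by a genuinely different and more elementary route: you observe that this map is convex (inherited from joint convexity) and attains its minimum value $0$ at $q=p$, so it must be non-increasing to the left of the minimizer. The paper instead computes one-sided directional derivatives of $\fb(p\|q)$ in $q$ explicitly, splits the result into a subgradient-inequality term and a term controlled by the monotonicity of $f'_\pm$, and argues each is nonpositive; your unimodality argument buys the same conclusion without any differentiation of $f$. Your handling of the feasibility transfer is also more careful than the paper's: the paper asserts that \emph{every} feasible point at $q$ remains feasible at $q' \geq q$, which is only justified for feasible $p \geq q'$ (the derivative computation there uses $p^\star \geq q$, and a feasible $p < q'$ can indeed lose feasibility since $\fb(p\|\cdot)$ increases to the right of its minimizer at $p$). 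Your two-case split --- handling $p^\star \leq q_2$ via feasibility of $q_2$ itself, and $p^\star > q_2$ via near-optimal feasible points above $q_2$ together with a limiting argument in $\varepsilon$ --- closes exactly this gap, and your remark on the $\pm\infty$ conventions at $q \in \{0,1\}$ addresses boundary cases the paper does not discuss.
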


\begin{proof}
The first property follows from the fact that increasing $c$ loosens the $f$-divergence constraint in the optimization problem in \eqref{eq:kl inverse}. The proof of the second property 
is provided in Appendix \ref{app:proofs} (\revision{Proposition}\ref{prop:f inverse properties}).
\end{proof}

The TRIP $\tau(\sigma_0; r_0)$ depends on two factors: the $f$-informativity $\I_f(o_0;s_0)$ (which depends on the robot's sensor) and the best reward \revision{$R_{0\to 1}^\perp$} achievable by an open-loop policy (which depends on the robot's task).
Using Proposition~\ref{prop:f inverse properties}, we see that as the sensor provides more information about the state (i.e., as $\I_f(o_0;s_0)$ increases for fixed $\revision{R_{0\to1}^\perp}$), the TRIP is monotonically non-decreasing. Moreover, the TRIP is a monotonically non-decreasing function of $\revision{R_{0\to1}^\perp}$ for fixed $\I_f(o_0;s_0)$. This qualitative dependence is intuitively appealing: if there is a good open-loop policy (i.e., one that achieves high reward), then the robot's sensor can provide a small amount of information about the state and still lead to good overall performance. The specific form of the definition of TRIP is motivated by the result below, which demonstrates that the TRIP upper bounds the best achievable expected reward \revision{$R^\star_{0\to 1}(\sigma_0; r_0)$} in Equation \eqref{eq:one step objective}.

\begin{restatable}[Single-step performance bound]{theorem}{thmonestep}
\label{thm:single-step bound}
The best achievable reward is upper bounded by the task-relevant information potential (TRIP) of a sensor:
\begin{equation}
    \label{eq:one-step bound}
    \small\tau(\sigma_0; r_0) \geq \revision{R^\star_{0\to 1}}(\sigma_0; r_0) = \ \sup_{\pi_0} \  \underset{s_0, o_0}{\mathbb{E}}[r_0(s_0, \pi_0(o_0))].
\end{equation}
\end{restatable}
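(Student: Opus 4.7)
The plan is to obtain the bound via a generalization of Fano's inequality consisting of three steps: (i) a change of measure from the true joint distribution on $(s_0,o_0)$ to a product distribution under which $o_0$ carries no information about $s_0$, (ii) a Bernoulli-reduction / data-processing inequality for $f$-divergence that transforms the change-of-measure cost into $\I_f(o_0;s_0)$, and (iii) an inversion through $\finv$ using the monotonicity properties established in Proposition~\ref{prop:f inverse properties}.

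\textbf{Step 1 (change of measure).} Fix any admissible policy $\pi_0$ and write $R(\pi_0) := \mathbb{E}_{p_0(s_0)\sigma_0(o_0|s_0)}[r_0(s_0,\pi_0(o_0))]$. For any candidate marginal $q(o_0)$, define
$$R_Q(\pi_0;q) := \underset{p_0(s_0)\,q(o_0)}{\mathbb{E}}[r_0(s_0,\pi_0(o_0))].$$
Under this product measure $o_0$ is independent of $s_0$, so for each $o_0$ the action $\pi_0(o_0)$ is selected without any information about the state; conditioning on $o_0$ and using the definition of $R_{0\to 1}^\perp$ gives $R_Q(\pi_0;q)\le R_{0\to 1}^\perp$ for every $q$.

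\textbf{Step 2 (Bernoulli data-processing inequality).} The central technical lemma is that for any measurable $g:\mathcal{X}\to[0,1]$ and distributions $P,Q$ on $\mathcal{X}$,
$$\fb\bigl(\mathbb{E}_P[g]\,\|\,\mathbb{E}_Q[g]\bigr)\le \F(P\,\|\,Q).$$
This follows by viewing $g$ as a randomized channel from $\mathcal{X}$ to $\{0,1\}$ (output $1$ with probability $g(x)$), under which $P$ and $Q$ push forward to $\mathcal{B}_{\mathbb{E}_P[g]}$ and $\mathcal{B}_{\mathbb{E}_Q[g]}$, and invoking the data-processing inequality for $f$-divergence. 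Applied with $g(s_0,o_0)=r_0(s_0,\pi_0(o_0))$, $P=p_0(s_0)\sigma_0(o_0|s_0)$, and $Q=p_0(s_0)q(o_0)$, it produces
$$\fb\bigl(R(\pi_0)\,\|\,R_Q(\pi_0;q)\bigr)\le \underset{p_0(s_0)}{\mathbb{E}}\bigl[\F(\sigma_0(o_0|s_0)\,\|\,q(o_0))\bigr],$$
because the $s_0$-marginals of $P$ and $Q$ agree, which simplifies the joint $f$-divergence to an expectation over $s_0$ of the conditional $f$-divergence in $o_0$.

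\textbf{Step 3 (inversion and optimization).} By the very definition of $\finv$ in~\eqref{eq:kl inverse}, the preceding inequality yields $R(\pi_0)\le \finv\bigl(R_Q(\pi_0;q)\,\big|\,\mathbb{E}_{p_0}[\F(\sigma_0\|q)]\bigr)$. Invoking monotonicity of $\finv$ in its first argument (Proposition~\ref{prop:f inverse properties}) together with $R_Q(\pi_0;q)\le R_{0\to 1}^\perp$, and then monotonicity in the second argument while taking the infimum over $q$, gives
$$R(\pi_0)\le \finv\!\Bigl(R_{0\to 1}^\perp\,\Big|\,\inf_{q}\underset{p_0(s_0)}{\mathbb{E}}\bigl[\F(\sigma_0\|q)\bigr]\Bigr) = \finv\!\bigl(R_{0\to 1}^\perp\,\big|\,\I_f(o_0;s_0)\bigr) = \tau(\sigma_0;r_0).$$
Taking $\sup_{\pi_0}$ on the left completes the proof.

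\textbf{Main obstacle.} The technical heart is Step 2: extending the contraction/data-processing inequality from indicator events to general $[0,1]$-valued payoffs in a way that produces a Bernoulli $f$-divergence on the left-hand side rather than something weaker. The randomized-channel interpretation is the cleanest route, but one must verify that the joint-convexity of $\F$ used in Section~\ref{sec:background} is exactly what makes this channel argument valid for arbitrary convex $f$. A secondary subtlety is interchanging the infimum over $q$ with $\finv$, which is justified by the fact that $\finv(p\mid\cdot)$ is monotone non-decreasing and upper semi-continuous as the supremum over a continuous sublevel set in $p$.
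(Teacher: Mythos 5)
Your proposal is correct and follows essentially the same route as the paper's proof: decouple the observation from the state via an auxiliary marginal $q(o_0)$, bound the Bernoulli $f$-divergence between the true and decoupled rewards by $\underset{p_0(s_0)}{\mathbb{E}}\big[\mathbb D_f(\sigma_0(o_0|s_0)\,\|\,q(o_0))\big]$ using the data-processing inequality, invert with $\finv$ and its monotonicity, bound the decoupled reward by $R_{0\to 1}^\perp$, and infimize over $q$ to obtain $\I_f(o_0;s_0)$. The only cosmetic differences are that you apply one data-processing step to the joint distributions and then use the marginal-agreement decomposition where the paper applies Jensen's inequality followed by a conditional data-processing step, and that you use the pointwise bound $R_Q(\pi_0;q)\le R_{0\to 1}^\perp$ rather than the paper's equality $\sup_{\pi_0}\tilde R_{0\to 1}=R_{0\to 1}^\perp$; both yield the same result.
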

\begin{proof}
The proof is provided in Appendix \ref{app:proofs} and is inspired by the proof of the generalized Fano inequality presented in Proposition 14 in the work by \cite{Gerchinovitz20}. The bound \eqref{eq:one-step bound} tightens the generalized Fano inequality \citep{Chen16, Gerchinovitz20} by utilizing the $f$-inverse (in contrast to the methods presented by \cite{Chen16} and \cite{Gerchinovitz20}, which may be interpreted as indirectly bounding the $f$-inverse). The result presented here may thus be of independent interest. 
\end{proof}

Theorem \ref{thm:single-step bound} provides a \emph{fundamental bound} on performance (in the sense of Section \ref{sec:intro}) imposed by the sensor for a given single-step task. This bound holds for \emph{any} policy, independent of its complexity or how it is synthesized or learned. \revision{Since the TRIP depends on the choice of $f$-divergence, the bound may be tightened by judiciously choosing $f$; we investigate this empirically in Section~\ref{sec:examples}.}

\section{Performance Bound for Multi-Step Problems: Fano's Inequality with Feedback}
\label{sec:multi-step}

In this section, we derive an upper bound on the best achievable reward $\revision{R_{0\to T}^\star}$ defined in \eqref{eq:control problem} for the general multi time-step setting. The key idea is to extend the single-step bound from Theorem \ref{thm:single-step bound} using a dynamic programming argument. 

Let \revision{$\pi_t^k$}$: \O^{k-t+1} \rightarrow \A$ denote a policy that takes as input the sequence of observations $o_{t:k}$ from time-step $t$ to $k$ (for $k \geq t$). Thus, a policy \revision{$\pi_0^k$} at time-step $k$ utilizes all observations received up to time-step $k$. Given an initial state distribution $p_0$ and an open-loop action sequence $a_{0:t-1}$, define the reward-to-go from time $t \in \{0,\dots,T-1\}$ given  $a_{0:t-1}$ as:
\begin{equation}
\label{eq:Rt}
    \revision{R_{t\to T}} := \underset{\substack{s_{t:T-1}, o_{t:T-1} | \\ a_{0:t-1}}}{\mathbb E} \ \Bigg{[} \sum_{k=t}^{T-1} r_k(s_k, \revision{\pi_t^k}(o_{t:k})) \Bigg{]},
\end{equation}

where the expectation,
\begin{align}
    \underset{\substack{s_{t:T-1}, o_{t:T-1} | \\ a_{0:t-1}}}{\mathbb E} \  \Big{[} \cdot \Big{]} 
\end{align}
is taken with respect to the distribution of states $s_{t:T-1}$ and observations $o_{t:T-1}$ one receives if one propagates $p_0$ using the open-loop sequence of actions from time-steps\footnote{For $t=0$, we use the convention that $a_{0:-1}$ is the empty sequence and:
\begin{align}
    \underset{\substack{s_{0:T-1}, o_{0:T-1} | \\ a_{0:-1}}}{\mathbb E} \  \Big{[} \cdot 
    \Big{]} \ := \underset{s_{0:T-1}, o_{0:T-1}}{\mathbb E} \  \Big{[} \cdot \Big{]}.
\end{align}} 
$0$ to $t-1$, and then applies the closed-loop policies \revision{$\pi_t^t, \pi_t^{t+1}, \dots, \pi_t^{T-1}$} from time-steps $t$ to $T-1$. 
We further define \revision{$R_{T\to T} := 0$}. 

Now, for $t \in \{0,\dots,T-1\}$, define:
\begin{equation}
\label{eq:Rt perp definition}
    \revision{R_{t\to T}^\perp} := \sup_{a_t} \ \Bigg{[} \underset{s_t|a_{0:t-1}}{\mathbb E} \Big{[} r_t(s_t, a_t) \Big{]} + R_{t+1} \Bigg{]},
\end{equation}
and 
\begin{equation}
\label{eq:Rt perp star definition}
    \revision{R_{t \to T}^{\perp \star}} := \sup_{\pi^{t+1}_{t+1}, \dots, \revision{\pi_{t+1}^{T-1}}} \revision{R_{t\to T}^\perp}.
\end{equation}

The following result then leads to a recursive structure for computing an upper bound on $\revision{R_{0\to T}^\star}$.  

\begin{restatable}[Recursive bound]{proposition}{proprecursion}
\label{prop:recursion}
For any $t = 0,\dots,T-1$, the following inequality holds for any open-loop sequence of actions $a_{0:t-1}$:
\begin{small}
\begin{equation}
\label{eq:history-dependent induction}
    \sup_{\pi^t_t, \dots, \revision{\pi_t^{T-1}}} \ \revision{R_{t\to T}} \  \leq \ \underset{\eqqcolon \tau_t(\sigma_{t:T-1}; r_{t:T-1}) }{\underbrace{(T-t) \cdot \finv \Bigg{(} \frac{\revision{R_{t\to T}^{\perp \star}}}{T-t} \ | \ \ \I_f(o_t; s_t) \Bigg{)}}}. 
\end{equation}
\end{small}
\end{restatable}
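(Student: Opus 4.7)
The plan is to reduce the multi-step bound to a single application of Theorem~\ref{thm:single-step bound} by constructing an auxiliary single-step decision problem at time $t$. Fix the open-loop prefix $a_{0:t-1}$ throughout, which determines the distribution of $s_t$ (and hence of $o_t$ and of $\I_f(o_t; s_t)$). The core idea is to package the immediate action at time $t$ together with all subsequent closed-loop policies into a single ``effective action,'' and to fold the full reward-to-go into a single ``effective reward,'' so that the multi-step optimization at time $t$ literally becomes a one-step optimization to which Theorem~\ref{thm:single-step bound} applies.

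Concretely, I would define an effective action $\tilde a_t := (a_t, \pi_{t+1}^{t+1}, \ldots, \pi_{t+1}^{T-1})$ ranging over $\A$ paired with all closed-loop policy tuples from time $t+1$ onward, and an effective reward
\begin{equation*}
\tilde r(s_t, \tilde a_t) := \tfrac{1}{T-t}\bigg[r_t(s_t, a_t) + \underset{s_{t+1:T-1}, o_{t+1:T-1} \,|\, s_t, a_t}{\mathbb E}\sum_{k=t+1}^{T-1} r_k(s_k, \pi_{t+1}^k(o_{t+1:k}))\bigg].
\end{equation*}
Assumption~\ref{ass:bounded rewards} immediately gives $\tilde r \in [0,1]$. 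The bookkeeping identity underlying everything is that the tuple $(\pi_t^t, \ldots, \pi_t^{T-1})$ of closed-loop policies on histories starting with $o_t$ is in bijection with maps $\tilde\pi_t$ from $o_t$ to effective actions, via currying $\pi_t^k(o_t, o_{t+1:k}) = \pi_{t+1}^{k;\,o_t}(o_{t+1:k})$. Using the Markov structure of the dynamics and the sensor model, a direct computation then yields $\underset{s_t, o_t}{\mathbb E}[\tilde r(s_t, \tilde\pi_t(o_t))] = R_{t\to T}/(T-t)$, and the definitions in \eqref{eq:Rt perp definition}--\eqref{eq:Rt perp star definition} give $\sup_{\tilde a_t} \underset{s_t}{\mathbb E}[\tilde r(s_t, \tilde a_t)] = R_{t\to T}^{\perp\star}/(T-t)$.

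With these two identities in hand, applying Theorem~\ref{thm:single-step bound} to the auxiliary problem (sensor $\sigma_t$, effective reward $\tilde r$, state prior induced by $a_{0:t-1}$) gives
\begin{equation*}
\tfrac{1}{T-t} \sup_{\pi_t^t,\ldots,\pi_t^{T-1}} R_{t\to T} \;\leq\; \finv\!\left(\tfrac{R_{t\to T}^{\perp\star}}{T-t} \,\Big|\, \I_f(o_t; s_t)\right),
\end{equation*}
and multiplying through by $T-t$ yields the claim. The main obstacle is verifying the currying bijection and the two identities with sufficient care: one must check that the effective open-loop supremum over $\tilde a_t$ really coincides with $R_{t\to T}^{\perp\star}/(T-t)$, rather than with some looser quantity that inadvertently permits $o_t$-dependence in the future policies, and that the equivalence between closed-loop tuples $(\pi_t^k)_k$ and $o_t$-indexed choices of future closed-loop policies $(\pi_{t+1}^k)_k$ preserves measurability so that Theorem~\ref{thm:single-step bound} is genuinely applicable. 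Once these identifications are in place, the bound follows by direct substitution.
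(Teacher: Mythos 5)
Your reduction is correct and is essentially the paper's own argument in different packaging: the paper re-runs the Jensen/data-processing/Fubini--Tonelli steps of Theorem~\ref{thm:single-step bound} directly on $R_{t\to T}$, replacing $\sigma_t(o_t|s_t)$ by a state-independent $q(o_t)$ and showing that the resulting $q$-blind optimum equals $R_{t\to T}^{\perp\star}$ --- which is exactly what your currying of $(\pi_t^t,\dots,\pi_t^{T-1})$ into an $o_t$-indexed effective action accomplishes once Theorem~\ref{thm:single-step bound} is invoked on the auxiliary problem. The two identities you flag as needing care are precisely the content of the paper's explicit Fubini--Tonelli computation, so nothing essential is missing.
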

\begin{proof}
The proof follows a similar structure to Theorem~\ref{thm:single-step bound} and is presented in Appendix \ref{app:proofs}.
\end{proof}

To see how we can use Proposition \ref{prop:recursion}, we first use \eqref{eq:control problem} and \eqref{eq:Rt} to note that the LHS of \eqref{eq:history-dependent induction} for $t=0$ is equal to $\revision{R_{0\to T}^\star}$:
 \begin{equation}
 \label{eq:R0 perp star bound}
    \revision{R_{0\to T}^\star} = \sup_{\pi^0_0, \dots, \revision{\pi_0^{T-1}}} \ \revision{ R_{0\to T}} \ \leq \ \tau_0(\sigma_{0:T-1}; r_{0:T-1}) \ .
\end{equation}

The quantity $\tau_t(\sigma_{t:T-1}; r_{t:T-1})$ may be interpreted as a multi-step version of the TRIP from Definition \ref{def:TRI one-step} \revision{(which again depends on the specific choice of $f$ one uses)}. This quantity depends on the $f$-informativity $\I_f(o_t; s_t)$, which is computed using the distribution $p_t(s_t|a_{0:t-1})$ over $s_t$ that one obtains by propagating $p_0$ using the open-loop sequence of actions $a_{0:t-1}$:
\begin{equation}
\label{eq:multi-step MI}
    \I_f(o_t; s_t) = \inf_q \ \underset{\substack{s_t| \\ a_{0:t-1}}}{\mathbb E} \mathbb D_f\Big{(}\sigma_t(o_t|s_t) \| q(o_t) \Big{)}.
\end{equation}

In addition, $\tau_t(\sigma_{t:T-1}; r_{t:T-1})$ depends on \revision{$ R_{t\to T}^{\perp \star}$}, which is then divided by $(T-t)$ to ensure boundedness between $[0,1]$ (see Assumption \ref{ass:bounded rewards}). The quantity \revision{$ R_{t\to T}^{\perp \star}$} can itself be upper bounded using \eqref{eq:history-dependent induction} with $t+1$, as we demonstrate below. Such an upper bound on \revision{$ R_{t\to T}^{\perp \star}$} for $t=0$ leads to an upper bound on $\revision{R_{0\to T}^\star}$ using \eqref{eq:R0 perp star bound} and the monotonicity of the $f$-inverse (Proposition~\ref{prop:f inverse properties}). Applying this argument recursively leads to Algorithm \ref{a:DP}, which computes an upper bound on $\revision{R_{0\to T}^\star}$. \revision{In Algorithm \ref{a:DP}, we use $\bar R$ to denote recursively-computed upper bounds on the RHS of \eqref{eq:history-dependent induction}.}

\begin{algorithm}[H] 
  \caption{{\small Multi-Step Performance Bound}}
  \label{a:DP}
  \begin{algorithmic}[1]
  \STATE Initialize \revision{$\bar{R}_{T\to T}$}$(a_{0:T-1}) = 0, \ \forall a_{0:T-1}$.
  \FOR{$t = T-1, T-2, \dots, 0$}
  \STATE \revision{$ \ \forall a_{0:t-1}$, compute:} \\ \revision{$\displaystyle\bar{R}_{t\to T}(a_{0:t-1}) := (T-t) \cdot \finv\Big{(} \frac{\bar{R}_{t\to T}^{\perp \star}}{T-t} \Big{|} \I_f(o_t; s_t) \Big{)} ,$\\
  where: \\
  $\bar{R}_{t\to T}^{\perp \star} := \underset{a_t}{\sup} \ \underset{s_t|a_{0:t-1}}{\mathbb E} \Big{[} r_{t}(s_t, a_t) \Big{]} + \bar{R}_{t+1\to T}(a_{0:t}). $} \\
  \ENDFOR
    \RETURN \revision{$\bar{R}_{0\to T}$} (bound on achievable expected reward).
  \end{algorithmic}
\end{algorithm}

\begin{restatable}[Multi-step performance bound]{theorem}{thmmultistep}
\label{thm:multi-step bound}
Algorithm \ref{a:DP} returns an upper bound on the best achievable reward $\revision{R_{0\to T}^\star}$. 
\end{restatable}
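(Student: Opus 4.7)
The plan is to prove the theorem by backward induction on $t$, showing that for every open-loop action prefix $a_{0:t-1}$, the quantity $\bar{R}_{t\to T}(a_{0:t-1})$ computed by Algorithm \ref{a:DP} upper bounds $\sup_{\pi_t^t, \dots, \pi_t^{T-1}} R_{t\to T}$. Specializing this to $t=0$ and using \eqref{eq:R0 perp star bound} then yields the theorem. The two main ingredients are Proposition \ref{prop:recursion}, which converts an upper bound on the open-loop-based quantity $R_{t\to T}^{\perp\star}$ into an upper bound on the closed-loop supremum of $R_{t\to T}$, and the monotonicity of $\finv$ from Proposition \ref{prop:f inverse properties}, which lets inductive bounds propagate through the $\finv$ step of the recursion.

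Concretely, for the base case $t=T$ both $\bar{R}_{T\to T}$ and $R_{T\to T}$ are identically zero by definition. For the inductive step, assume that for every $a_{0:t}$,
\begin{equation}
\bar{R}_{t+1\to T}(a_{0:t}) \ \geq \ \sup_{\pi_{t+1}^{t+1},\dots,\pi_{t+1}^{T-1}} R_{t+1\to T}. \nonumber
\end{equation}
Substituting this inequality into the definition of $\bar{R}_{t\to T}^{\perp\star}$ in Algorithm \ref{a:DP} and comparing with the definitions \eqref{eq:Rt perp definition}--\eqref{eq:Rt perp star definition} gives $\bar{R}_{t\to T}^{\perp\star} \geq R_{t\to T}^{\perp\star}$ (the $\sup$ over $a_t$ and the inner conditional expectation are identical on both sides; only the remaining reward-to-go term has been upper-bounded). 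Dividing by $T-t$ preserves the inequality, and Proposition \ref{prop:f inverse properties}(1) (monotonicity of $\finv$ in its second argument) together with Proposition \ref{prop:f inverse properties}(2) (monotonicity in its first argument) implies
\begin{equation}
(T-t)\cdot\finv\!\Big(\tfrac{\bar{R}_{t\to T}^{\perp\star}}{T-t}\,\Big|\,\I_f(o_t;s_t)\Big) \ \geq \ (T-t)\cdot\finv\!\Big(\tfrac{R_{t\to T}^{\perp\star}}{T-t}\,\Big|\,\I_f(o_t;s_t)\Big). \nonumber
\end{equation}
The left-hand side is exactly $\bar{R}_{t\to T}(a_{0:t-1})$, and the right-hand side is at least $\sup_{\pi_t^t,\dots,\pi_t^{T-1}} R_{t\to T}$ by Proposition \ref{prop:recursion}, completing the induction. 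Finally, applying the bound at $t=0$ (with the convention that the empty prefix $a_{0:-1}$ is vacuous) together with \eqref{eq:R0 perp star bound} gives $\bar{R}_{0\to T} \geq R_{0\to T}^\star$.

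The main subtlety, rather than a difficulty, is keeping the two flavors of ``sup'' straight: the recursion in Algorithm \ref{a:DP} takes suprema over open-loop actions $a_t$ conditioned on $a_{0:t-1}$, whereas the bound being inductively strengthened is a supremum over history-dependent closed-loop policies $\pi_t^k$. Proposition \ref{prop:recursion} is precisely the tool that bridges these two, so the structural work has already been done; the theorem proof is essentially the bookkeeping that threads its conclusion through a standard dynamic-programming induction, making sure that at each step the inductive hypothesis is invoked pointwise in $a_{0:t}$ before being fed into the $\sup_{a_t}$ defining $\bar{R}_{t\to T}^{\perp\star}$.
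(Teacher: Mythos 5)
Your proposal is correct and follows essentially the same route as the paper's proof: backward induction on $t$, using the inductive hypothesis to establish $\bar{R}_{t\to T}^{\perp\star} \geq R_{t\to T}^{\perp\star}$, then threading this through Proposition \ref{prop:recursion} and the monotonicity of $\finv$ in its first argument (the paper anchors the base case at $t=T-1$ via Proposition \ref{prop:recursion} rather than at $t=T$, but this is only a bookkeeping difference). The only nitpick is that you cite both parts of Proposition \ref{prop:f inverse properties} where only monotonicity in $q$ (part 2) is actually needed, since the $f$-informativity argument is unchanged.
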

\begin{proof}
We provide a sketch of the proof here, which uses backwards induction. In particular, Proposition \ref{prop:recursion} leads to the inductive step. See Appendix \ref{app:proofs} for the complete proof. 

We prove that for all $t=T-1,\dots,0$, 
\begin{equation}
\label{eq:algorithm bound sketch}
    \sup_{\pi^t_t, \dots, \revision{\pi_t^{T-1}}} \ \revision{R_{t\to T}} \leq \revision{\bar{R}_{t\to T}}(a_{0:t-1}), \ \forall a_{0:t-1}.
\end{equation}
Thus, in particular, 
\begin{equation}
    \revision{R_{0\to T}^\star} = \sup_{\pi^0_0, \dots, \revision{\pi_0^{T-1}}} \revision{\ R_{0\to T} \leq \bar{R}_{0\to T}}.
\end{equation}

We prove \eqref{eq:algorithm bound sketch} by backwards induction starting from $t=T-1$. We first prove the base step. Using \eqref{eq:history-dependent induction}, we obtain:
\begin{equation}
\label{eq:induction base sketch}
        \small\sup_{\pi^{T-1}_{T-1}} \ \revision{R_{T-1\to T}} \  \leq \ \finv \Big{(} \revision{R_{T-1\to T}^{\perp \star}} \ | \ \ \I_f(o_{T-1}; s_{T-1}) \Big{)}.
\end{equation}
Using the fact that \revision{$R_{T\to T} = 0$}, we can show that \revision{$R_{T-1\to T}^{\perp \star} = \bar{R}_{T-1\to T}^{\perp \star}$}. 
Combining this with \eqref{eq:induction base sketch} and the monotonicity of the $f$-inverse (Proposition \ref{prop:f inverse properties}), we see:
\begin{small}
    \begin{align}
    \sup_{\pi^{T-1}_{T-1}} \ \revision{R_{T-1\to T}} \  &\leq \ \finv \Big{(} \revision{\bar{R}_{T-1\to T}^{\perp \star}} \ | \ \ \I_f(o_{T-1}; s_{T-1}) \Big{)} \\
    &= \revision{\bar{R}_{T-1\to T}}(a_{0:T-2}).
\end{align}
\end{small}

In order to prove the induction step, suppose that for $t \in \{0,\dots,T-2\}$, we have
\begin{equation}
\label{eq:induction hypothesis sketch}
    \sup_{\pi^{t+1}_{t+1}, \dots, \revision{\pi_{t+1}^{T-1}}} \ \revision{R_{t+1\to T}} \leq \revision{\bar{R}_{t+1\to T}}(a_{0:t}).
\end{equation}
We then need to show that
\begin{equation}
\label{eq:induction WTS sketch}
    \sup_{\pi^{t}_{t}, \dots, \revision{\pi_{t}^{T-1}}} \ \revision{R_{t\to T}} \leq \revision{\bar{R}_{t\to T}}(a_{0:t-1}).
\end{equation}
We can use the induction hypothesis \eqref{eq:induction hypothesis sketch} to show that \revision{$R_{t\to T}^{\perp \star} \leq \bar{R}_{t\to T}^{\perp \star}$}. Combining this with \eqref{eq:history-dependent induction} and the monotonicity of the $f$-inverse (Proposition \ref{prop:f inverse properties}), we obtain the desired result \eqref{eq:induction WTS sketch}:
\begin{small}
\begin{align}
  \sup_{\pi^t_t, \dots, \revision{\pi_t^{T-1}}} \ \revision{R_{t\to T}} \  &\leq \ \textstyle(T-t) \cdot \finv \Bigg{(} \frac{\revision{\bar{R}_{t\to T}^{\perp \star}}}{T-t} \ | \ \ \I_f(o_t; s_t) \Bigg{)} \\
  &= \revision{\bar{R}_{t\to T}}(a_{0:t-1}).
\end{align}
\end{small}
\end{proof}

\section{Numerical Implementation}
\label{sec:implementation}

In order to compute the single-step bound using Theorem~\ref{thm:single-step bound} or the multi-step bound using Algorithm \ref{a:DP}, we require the ability to compute (or bound) three quantities: (i) the $f$-inverse, (ii) the $f$-informativity $\I_f(o_t; s_t)$, and (iii) the quantity \revision{$\bar{R}_{t\to T}^{\perp \star}$}. As described in Section \ref{sec:kl inverse}, we can compute the $f$-inverse efficiently using a convex program \citep[Ch. 4]{Boyd04} with a single decision variable. There are a multitude of solvers for convex programs including Mosek \citep{MOSEK} and the open-source solver SCS ~\citep{SCS}. Next, we describe the computation of $\I_f(o_t; s_t)$ and \revision{$\bar{R}_{t\to T}^{\perp \star}$} in different settings.

\subsection{Analytic Computation}
\label{sec: analytic computation}

In certain settings, one can compute $\I_f(o_t; s_t)$ and \revision{$\bar{R}_{t\to T}^{\perp \star}$} exactly. We discuss two such settings of interest below. 

{\bf Discrete POMDPs.} In cases where the state space $\S$, action space $\A$, and observation space $\O$ are finite, one can compute $\I_f(o_t; s_t)$ exactly by propagating the initial state distribution $p_0$ forward using open-loop action sequences $a_{0:t-1}$ and using the expression \eqref{eq:multi-step MI} (which can be evaluated exactly since we have discrete probability distributions). The expectation term in \revision{$\bar{R}_{t\to T}^{\perp \star}$} can be computed similarly. In addition, the supremum over actions can be evaluated exactly via enumeration. 

{\bf Linear-Gaussian systems with finite action spaces.} One can also perform exact computations in cases where (i) the state space $\S$ is continuous and the dynamics $p_t(s_{t}|s_{t-1},a_{t-1})$ are given by a linear dynamical system with additive Gaussian uncertainty, (ii) the observation space $\O$ is continuous and the sensor model $\sigma_t(o_t|s_t)$ is such that the observations are linear (in the state) with additive Gaussian uncertainty, (iii) the initial state distribution $p_0$ is Gaussian, and (iv) the action space $\A$ is finite. In such settings, one can analytically propagate $p_0$ forward through open-loop action sequences $a_{0:t-1}$ using the fact that Gaussian distributions are preserved when propagated through linear-Gaussian systems (similar to Kalman filtering \citep{Thrun05}). One can then compute $\I_f(o_t; s_t)$ using \eqref{eq:multi-step MI} by leveraging the fact that all the distributions involved are Gaussian, for which KL divergences (and some $f$-divergences) can be computed in closed form \citep{Duchi16}. One can also compute \revision{$\bar{R}_{t\to T}^{\perp \star}$} exactly for any reward function that permits the analytic computation of the expectation term using a Gaussian (e.g., quadratic reward functions); the supremum over actions can be evaluated exactly since $\A$ is finite. 

\subsection{Computation via Sampling and Concentration Inequalities}
\label{sec:sampling}

{\bf General settings.} Next, we consider more general settings with: (i) continuous state and observation spaces, (ii) arbitrary (e.g, non-Gaussian/nonlinear) dynamics $p_t(s_{t}|s_{t-1},a_{t-1})$, which are potentially not known analytically, but can be sampled from (e.g., as in a simulator), (iii) arbitrary (e.g., non-Gaussian/nonlinear) sensor $\sigma_t(o_t|s_t)$, but with a probability density function that can be numerically evaluated given any particular state-observation pair, (iv) an arbitrary initial state distribution $p_0$ that can be sampled from, and (v) a finite action space. Our bound is thus broadly applicable, with the primary restriction being the finiteness of $\A$; we leave extensions to continuous action spaces for future work (see Section \ref{sec:conclusion}).

We first discuss the computation of \revision{$\bar{R}_{t\to T}^{\perp \star}$}. Since the supremization over actions can be performed exactly (due to the finiteness of $\A$), the primary challenge here is to evaluate the expectation:
\begin{equation}
\label{eq:expectation in Rt}
    \underset{s_t|a_{0:t-1}}{\mathbb E} \Big{[} r_{t}(s_t, a_t) \Big{]}.
\end{equation}
We note that any upper bound on this expectation leads to an upper bound on \revision{$\bar{R}_{t\to T}^{\perp \star}$}, and thus a valid upper bound on \revision{$R_{0\to T}^\star$} (due to the monotonicity of the $f$-inverse; Proposition~\ref{prop:f inverse properties}). One can thus obtain a high-confidence upper bound on \eqref{eq:expectation in Rt} by sampling states $s_t|a_{0:t-1}$ and using any concentration inequality \citep{Boucheron13}. In particular, since we assume boundedness of rewards (Assumption \ref{ass:bounded rewards}), we can use Hoeffding's inequality. 

\begin{theorem}[Hoeffding's inequality \citep{Boucheron13}]
\label{thm:hoeffding}
Let $z$ be a random variable bounded within $[0,1]$, and let $z_1,\dots, z_n$ denote i.i.d. samples. \revision{Then, with probability at least $1 - \delta$ (over the sampling of $z_1,\dots,z_n$), the following bound holds}:
\begin{equation}
    \mathbb{E}[z] \leq \frac{1}{n}\sum_{i=1}^n z_i + \sqrt{\frac{\log(1/\delta)}{2n}}.
\end{equation}
\end{theorem}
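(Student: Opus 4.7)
The plan is to prove Hoeffding's inequality via the standard exponential moment (Chernoff) method combined with Hoeffding's lemma for the moment generating function of a bounded centered random variable. First I would restate the goal as a tail bound: setting $t := \sqrt{\log(1/\delta)/(2n)}$, it suffices to show that $\Pr\!\bigl( \mathbb{E}[z] - \tfrac{1}{n}\sum_{i=1}^n z_i > t \bigr) \leq \delta$, since rearranging this deviation bound yields exactly the stated high-probability inequality. Defining the mean-zero variables $X_i := \mathbb{E}[z_i] - z_i$, each $X_i$ takes values in an interval of length at most $1$ (because $z_i \in [0,1]$), and the $X_i$ remain independent because the $z_i$ are i.i.d. The event of interest becomes $\{\sum_{i=1}^n X_i > nt\}$.

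Next I would apply Markov's inequality to the exponentiated sum: for any $\lambda > 0$, $\Pr(\sum_i X_i > nt) \leq e^{-\lambda n t}\, \mathbb{E}\bigl[e^{\lambda \sum_i X_i}\bigr] = e^{-\lambda n t}\prod_{i=1}^n \mathbb{E}[e^{\lambda X_i}]$, using independence to factor the joint moment generating function. The central lemma I would invoke (Hoeffding's lemma) states that for any mean-zero random variable $X$ supported in an interval of length $L$, one has $\mathbb{E}[e^{\lambda X}] \leq e^{\lambda^2 L^2 / 8}$; here $L = 1$, so each factor is at most $e^{\lambda^2/8}$. This reduces the bound to $\Pr(\sum_i X_i > n t) \leq \exp\!\bigl(-\lambda n t + n \lambda^2/8\bigr)$.

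Finally, I would optimize over $\lambda > 0$. The exponent is a convex quadratic in $\lambda$, minimized at $\lambda^\star = 4t$, yielding the cleanest form $\Pr(\sum_i X_i > n t) \leq e^{-2 n t^2}$. Substituting the chosen $t = \sqrt{\log(1/\delta)/(2n)}$ gives $e^{-2nt^2} = \delta$, which completes the argument. The only step that is not purely mechanical is Hoeffding's lemma itself; the main obstacle of the proof is its justification. I would prove it by using convexity of $x \mapsto e^{\lambda x}$ to bound $e^{\lambda X}$ pointwise by the chord through the endpoints of the support, taking expectations to get a two-point bound in terms of a Bernoulli-type convex combination, and then expressing the logarithm of that bound as a function $\psi(u) := -pu + \log(1 - p + p e^u)$ of $u := \lambda L$. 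A short calculation shows $\psi(0) = \psi'(0) = 0$ and $\psi''(u) \leq 1/4$ everywhere, so Taylor's theorem yields $\psi(u) \leq u^2/8$, which is precisely the claimed sub-Gaussian moment generating function bound.
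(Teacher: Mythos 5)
Your proof is correct: the reduction to the tail event $\Pr\bigl(\mathbb{E}[z]-\tfrac{1}{n}\sum_i z_i > t\bigr) \leq \delta$ with $t=\sqrt{\log(1/\delta)/(2n)}$, the Chernoff/Markov step with independence, Hoeffding's lemma with $L=1$, and the optimization $\lambda^\star = 4t$ giving exponent $-2nt^2 = \log\delta$ all check out, as does your sketch of the lemma itself (the key computation $\psi''(u) = s(1-s) \leq 1/4$ with $s = pe^u/(1-p+pe^u)$ is right). Note, however, that the paper does not prove this statement at all --- Theorem~\ref{thm:hoeffding} is quoted as a classical result from the cited reference, so there is no in-paper argument to compare against; your derivation is essentially the standard one found in that reference. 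One connection worth knowing: in Appendix~\ref{app:hoeffding} the paper actually uses a tighter Chernoff--Hoeffding variant stated in terms of the binary KL divergence, $\KL\bigl(\tfrac{1}{n}\sum_i z_i \,\|\, \mathbb{E}[z]\bigr) \leq \log(2/\delta)/n$, which it then inverts numerically; your sub-Gaussian bound is exactly what one recovers from that KL form after relaxing it via Pinsker's inequality $\KL(p\|q) \geq 2(p-q)^2$, which explains why the paper's version is sharper and why it is computed by the same kind of divergence inversion as the $f$-inverse in Section~\ref{sec:kl inverse}.
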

In our numerical examples (Section \ref{sec:examples}), we utilize a slightly tighter version of Hoeffding's inequality (see Appendix \ref{app:hoeffding}). 

Next, we discuss the computation of $\I_f(o_t; s_t)$. Again, we note that any upper bound on $\I_f(o_t; s_t)$ yields a valid upper bound on \revision{$R_{0\to T}^\star$} due to the monotonicity of the $f$-inverse. In general, since $f$-informativity is the infimum among all distributions $q(o_t)$ over the observations at time $t$, the marginal distribution $\sigma_t(o_t)$ provides a valid upper bound:
\begin{small}
\begin{equation}
    \resizebox{.85\hsize}{!}{$\I_f(o_t; s_t) \leq \mathbb D_f\Big{(}p_t(s_t|a_{0:t-1})\sigma_t(o_t|s_t) \| p_t(s_t|a_{0:t-1})\sigma_t(o_t) \Big{)}.$}
\end{equation}
\end{small}
For KL divergence and mutual information specifically, we utilize \emph{variational bounds}; in particular, we use the ``leave-one-out bound" \citep{Poole19}:
\begin{small}
\begin{equation}
\label{eq:variational MI bound}
    \I(o_t; s_t)  \leq \mathbb{E} \Bigg{[} \frac{1}{K} \sum_{i=1}^K \Bigg{[} \log \frac{\sigma_t(o_t^{[i]}|s_t^{[i]})}{\frac{1}{K-1} \sum_{j\neq i} \sigma_t(o_t^{[i]}|s_t^{[j]})} \Bigg{]} \Bigg{]},
\end{equation}
\end{small}

where the expectation is over size-$K$ batches $\{ (s_t^{[i]}, o_t^{[i]}) \}_{i=1}^K$ of sampled states $s_t|a_{0:t-1}$ and observations sampled using $\sigma_t(o_t|s_t)$. The quantity $\sigma_t(o_t^{[i]}|s_t^{[i]})$ denotes (with slight abuse of notation) the evaluation of the density function corresponding to the sensor model. Since the bound \eqref{eq:variational MI bound} is in terms of an expectation, one can again obtain a high-confidence upper bound by sampling state-observation batches and applying a concentration inequality (e.g., Hoeffding's inequality if the quantity inside the expectation is bounded). 

We note that the overall implementation of Algorithm~\ref{a:DP} may involve the application of multiple concentration inequalities (each of which holds with some confidence $1-\delta_i$). One can obtain the overall confidence of the upper bound on \revision{$R_{0\to T}^\star$} by using a union bound: $1 - \delta = 1 - \sum_i \delta_i$.

\subsection{Tightening the Bound with Multiple Horizons}

We end this section by discussing a final implementation detail. Let $T$ denote the time horizon of interest (as in Section~\ref{sec:problem formulation}). For any $H \in \{1,\dots,T\}$, one can define
\begin{equation}
    \revision{R^\star_{0\to H}} := \underset{\pi_{0:H-1}}{\sup} \ \ \underset{\substack{s_{0:H-1} \\ o_{0:H-1}}}{\mathbb{E}}\Bigg{[}\sum_{t=0}^{H-1} r_t(s_t, \pi_t(o_{0:t}))\Bigg{]}
\end{equation}
as the best achievable reward for a problem with horizon $H$ (instead of $T$). One can then apply Algorithm \ref{a:DP} to compute an upper bound on \revision{$R^\star_{0\to H}$}. Since rewards are assumed to be bounded within $[0,1]$ (Assumption \ref{ass:bounded rewards}), we can observe that \revision{$R_{0\to T}^\star\leq R^\star_{0\to H} + (T-H)$}; \revision{ this is equivalent to computing the bound using a horizon $H$ and then adding a reward of $1$ for times beyond this horizon}. In practice, we sometimes find that this bound provides a tighter bound on \revision{$R_{0\to T}^\star$} for some $H < T$ (as compared to directly applying Algorithm \ref{a:DP} with a horizon of $T$). For our numerical examples, we thus sweep through different values for the horizon $H$ and report the lowest upper bound $\revision{R^\star_{0\to H}} + (T-H)$.

\section{Examples}
\label{sec:examples}

We demonstrate our approach on three examples: (i) the lava problem from the POMDP literature, (ii) an example with continuous state and observation spaces corresponding to a robot catching a freely-falling object, and (iii) obstacle avoidance using a depth sensor with non-Gaussian noise. We illustrate the strength of our upper bounds on these examples by comparing them against the performance achieved by concrete control policies (i.e., lower bounds on achievable performance). 
We also demonstrate the applicability of our approach for establishing the superiority of one sensor over another (from the perspective of a given task). Code for all examples can be found at: \href{https://github.com/irom-lab/performance-limits}{https://github.com/irom-lab/performance-limits}. 

\subsection{Lava Problem}
\label{sec:lava problem}

The first example we consider is the lava problem (Figure~\ref{fig:lava}) \citep{Cassandra94, Florence17, Pacelli20} from the POMDP literature. 

{\bf Dynamics.}  The setting consists of five discrete states (Figure~\ref{fig:lava}) and two actions (\texttt{left} and \texttt{right}). If the robot falls into the lava state, it remains there (i.e., the lava state is absorbing). If the robot attempts to go left from state 1, it remains at state 1. The initial state distribution $p_0$ is chosen to be uniform over the non-lava states. 

{\bf Sensor.}
The robot is equipped with a sensor that provides a noisy state estimate. The sensor reports the correct state (i.e., $o_t=s_t$) with probability $p_\text{correct}$, \revision{and a uniformly randomly chosen incorrect state with probability $1-p_\text{correct}$.}

{\bf Rewards.}
The robot's objective is to navigate to the goal state (which is an absorbing state), within a time horizon of $T=5$. This objective is encoded via a reward function $r_t(s_t, a_t)$, which is purely state-dependent. The reward associated with being in the lava is 0; the reward associated with being at the goal is 1; the reward at all other states is $0.1$. 
 
{\bf Results.} An interesting feature of this problem is that it admits a purely \emph{open-loop} policy that achieves a high expected reward. In particular, consider the following sequence of actions: \texttt{left}, \texttt{right}, \texttt{right}. No matter which state the robot starts from, this sequence of actions will steer the robot to the goal state (recall that the goal is absorbing). Given the initial distribution and rewards above, this open-loop policy achieves an expected reward of 3.425. Suppose we set $p_\text{correct} = 1/5$ (i.e., the sensor just uniformly randomly returns a state estimate and thus provides no information regarding the state). In this case, Algorithm~\ref{a:DP} returns an upper bound: $3.5 \geq \revision{R_{0\to T}^\star}$. 

Next, we plot the upper bounds provided by Algorithm \ref{a:DP} for different values of sensor noise by varying $p_\text{correct}$. \revision{With the freedom to choose the exact $f$-divergence to use in Algorithm~\ref{a:DP}, we evaluate a series of common $f$-divergences shown in Table \ref{tab:f_func}.}

\begin{table}[H]
    \centering
    \small\centering
    \caption{$f$-Divergences Used to Compute Upper Bounds.}
    \begin{tabular}{c c}
    \toprule
        Divergence &  Corresponding $f$\\
    \midrule
        Kullback-Leibler & $x\log(x)$\\
        Negative log & $-\log(x)$\\
        Total variation & $\frac{1}{2}|x-1|$\\
        Pearson $\chi^2$ & $(x-1)^2$\\
        Jensen-Shannon & $-(x+1)\ln(\frac{x+1}{2})+x\ln x$\\
        Squared Hellinger & $(\sqrt{t}-1)^2$\\
        Neyman $\chi^2$ & $\frac{1}{t}-1$\\
    \bottomrule
    \end{tabular}
    \label{tab:f_func}
\end{table}

Three of the resulting bounds are shown in Figure \ref{fig:lava1 results}. The three bounds are chosen to be plotted because: 1) KL divergence is one of the best-known divergences, 2) Total variation distance provides the tightest bounds among those computed when sensor accuracy is low, and 3) Neyman $\chi^2$-divergence provides the tightest bounds among computed when sensor accuracy is higher. Since the results for different sensor noise levels are independent of each other, we can always choose the particular $f$-divergence that returns the tightest bound. 

\begin{figure}[t]
\begin{center}
\includegraphics[width=0.99\columnwidth]{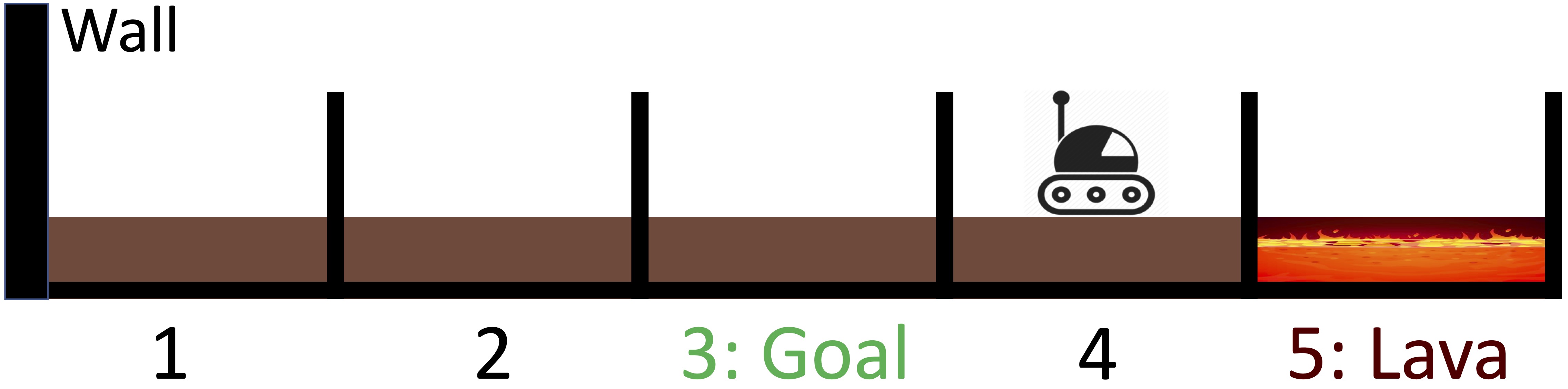}
\end{center} 
\vspace{-5pt}
\caption{ An illustration of the lava problem. The
robot needs to navigate to a goal without falling into the lava (using a noisy sensor). \label{fig:lava}}
\end{figure}

\begin{figure}[t]
\begin{center}
\includegraphics[width=0.99\columnwidth]{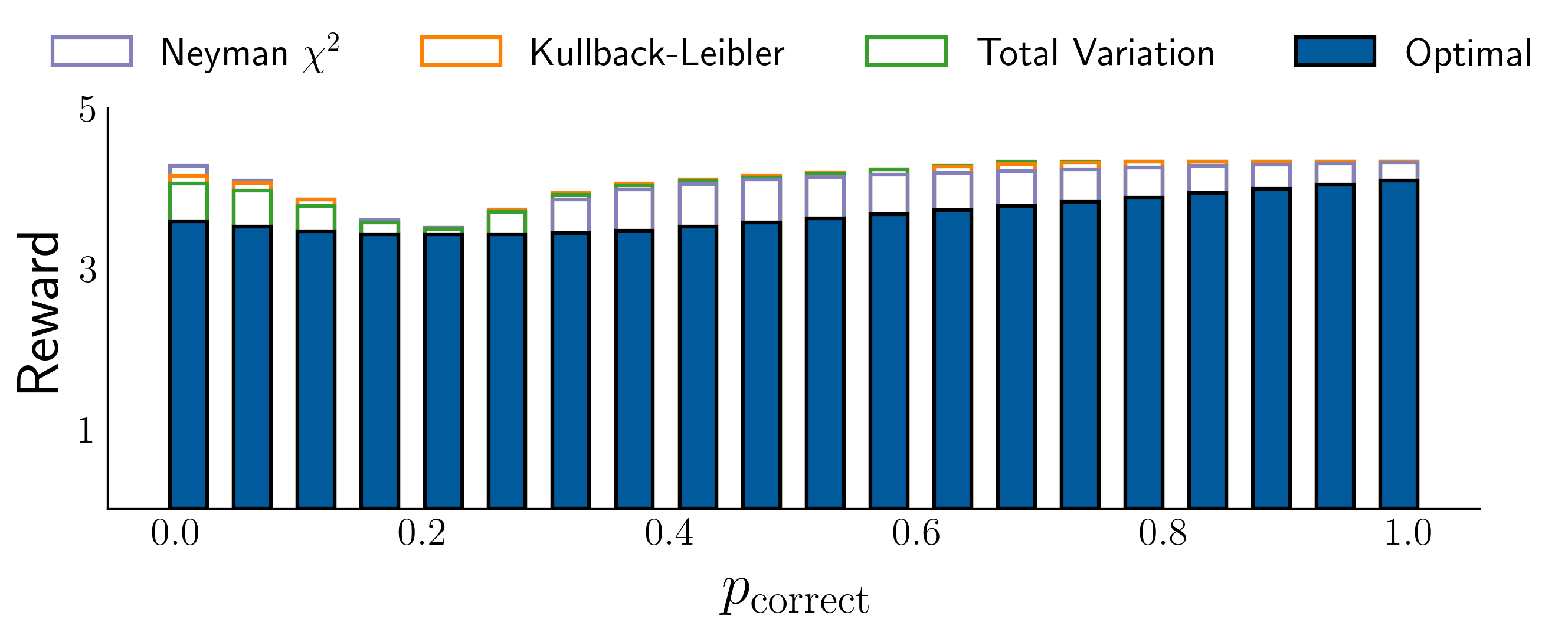}
\end{center} 
\vspace{-15pt}
\caption{Results for the lava problem. We compare the upper bounds on achievable expected rewards computed by our approach using three different $f$-divergences with the optimal POMDP solution for different values of sensor noise. \label{fig:lava1 results}}
\end{figure}

Since the lava problem is a finite POMDP, one can compute $\revision{R_{0\to T}^\star}$ \emph{exactly} using a POMDP solver. Figure~\ref{fig:lava1 results} compares the upper bounds on $\revision{R_{0\to T}^\star}$ returned by Algorithm~\ref{a:DP} with $\revision{R_{0\to T}^\star}$ computed using the \texttt{pomdp\_py} package~\citep{Zheng20}. The figure illustrates that our approach provides strong bounds on the best achievable reward for this problem. We also note that computation of the POMDP solution (for each value of $p_\text{correct}$) takes $\sim$20s, while the computation of the bound takes $\sim$0.2s (i.e., $\sim100\times$ faster).

{\bf Tightening the upper bound.} In Figure \ref{fig:lava1 results}, we note that different $f$-divergences yield different upper bound values. While all bounds are valid, we are motivated to search for the tightest bound by optimizing the function $f$. To do so, we define \revision{a family of }generic piecewise-linear functions that are convex and satisfy $f(1)=0$; then for a given sensor noise level, we \revision{minimize the} upper bounds over the \revision{defined family of }piecewise-linear functions \revision{using \texttt{scipy.optimize}}. To define such a family of functions, we divide the interval $(0,2]$ into \revision{$n$} steps of equal length, each step with a corresponding slope $s_i$. Since $f$ is a function on $\mathbb R^+$, we extend the last step interval to positive infinity. The \revision{$n$} slopes, along with the constraints that $f$ is convex (therefore continuous) and $f(1) = 0$, complete the definition of $f$. 
The results obtained by setting \revision{$n=10$} are shown in Figure \ref{fig:lava PL results}, and the resulting functions $f$ that give the tightest bounds for \revision{the first 15} sensor noise level are shown in Figure~\ref{fig:lava PL funcs}. We can see that the functions tightening the bounds follow a general trend from more parabolic to more log-like as $p_\text{correct}$ increases.

\begin{figure}[t]
    \begin{center}
        \includegraphics[width=0.99\columnwidth]{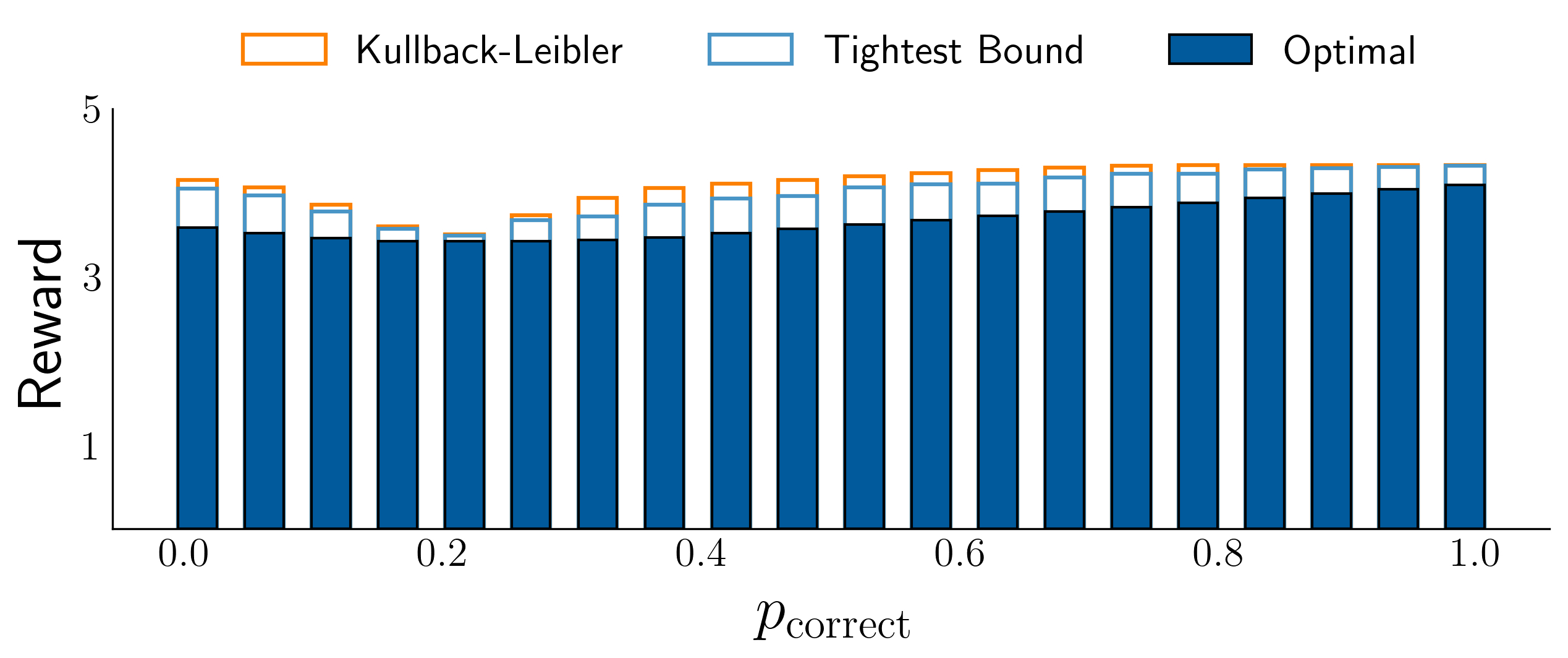}
    \end{center}
    \vspace{-15pt}
    \caption{Tightest upper bounds for the lava problem. We compare the optimized upper bounds on achievable expected rewards with the bounds computed by Algorithm \ref{a:DP} using KL divergence and the optimal POMDP solution.}
    \label{fig:lava PL results}
\end{figure}

\begin{figure}[t]
    \begin{center}
        \includegraphics[width=0.99\columnwidth]{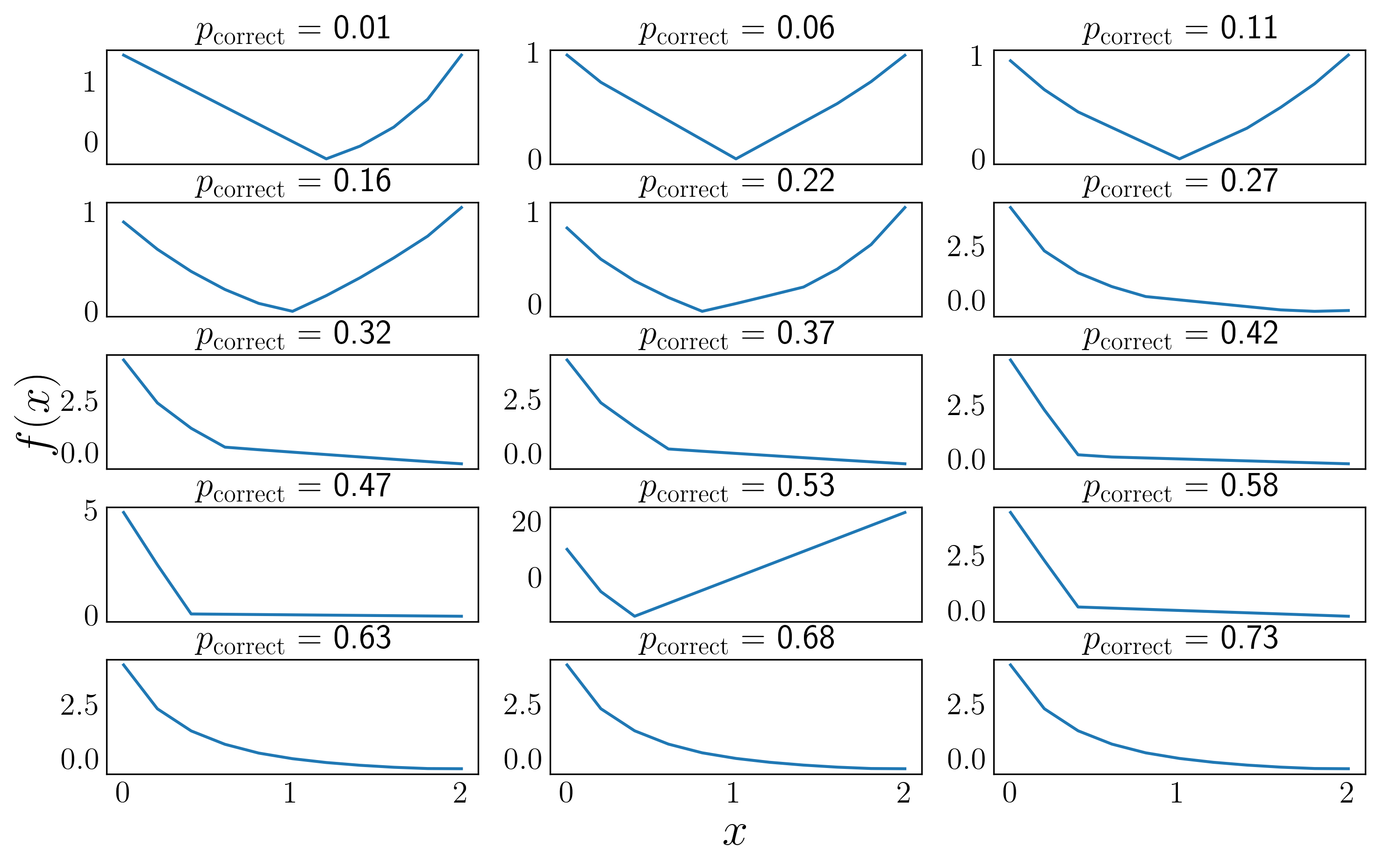}
    \end{center}
    \vspace{-15pt}
    \caption{Functions $f$ that provide the tightest bounds for corresponding sensor noise level.}
    \label{fig:lava PL funcs}
\end{figure}

{\bf Comparing with generalized Fano's inequality.} In the work by \cite{Chen16}, the generalized Fano's inequality gives a lower bound for the Bayesian risk, which translates to an upper bound for the expected reward of:
\begin{equation}
\revision{R^\star_{0\to 1}}(\sigma_0, r_0) \leq\frac{\mathbb I(o_0;s_0)+\log(1+\revision{R_{0\to 1}^\perp})}{\log(1/(1-\revision{R_{0\to 1}^\perp})}.
\label{eq:fano}
\end{equation}

We compare the upper bounds obtained via our approach (Theorem~\ref{thm:single-step bound}) using KL divergence with the ones obtained by equation (\ref{eq:fano}) for the single-step lava problem. The results shown in Figure \ref{fig:fano} demonstrate that our novel bound in Theorem~\ref{thm:single-step bound} is significantly tighter, especially for larger values of $p_\text{correct}$. \revision{Indeed, the bound from Theorem~\ref{thm:single-step bound} is \emph{always} at least as tight as inequality \eqref{eq:fano} (not just for this specific example). This is because the RHS of \eqref{eq:one-step bound} is a lower bound on the RHS of \eqref{eq:fano}}. Our bound may thus be of independent interest for applications considered by \cite{Chen16} \revision{such as} establishing sample complexity bounds for various learning problems.

\begin{figure}[t]
    \begin{center}
        \includegraphics[width=0.99\columnwidth]{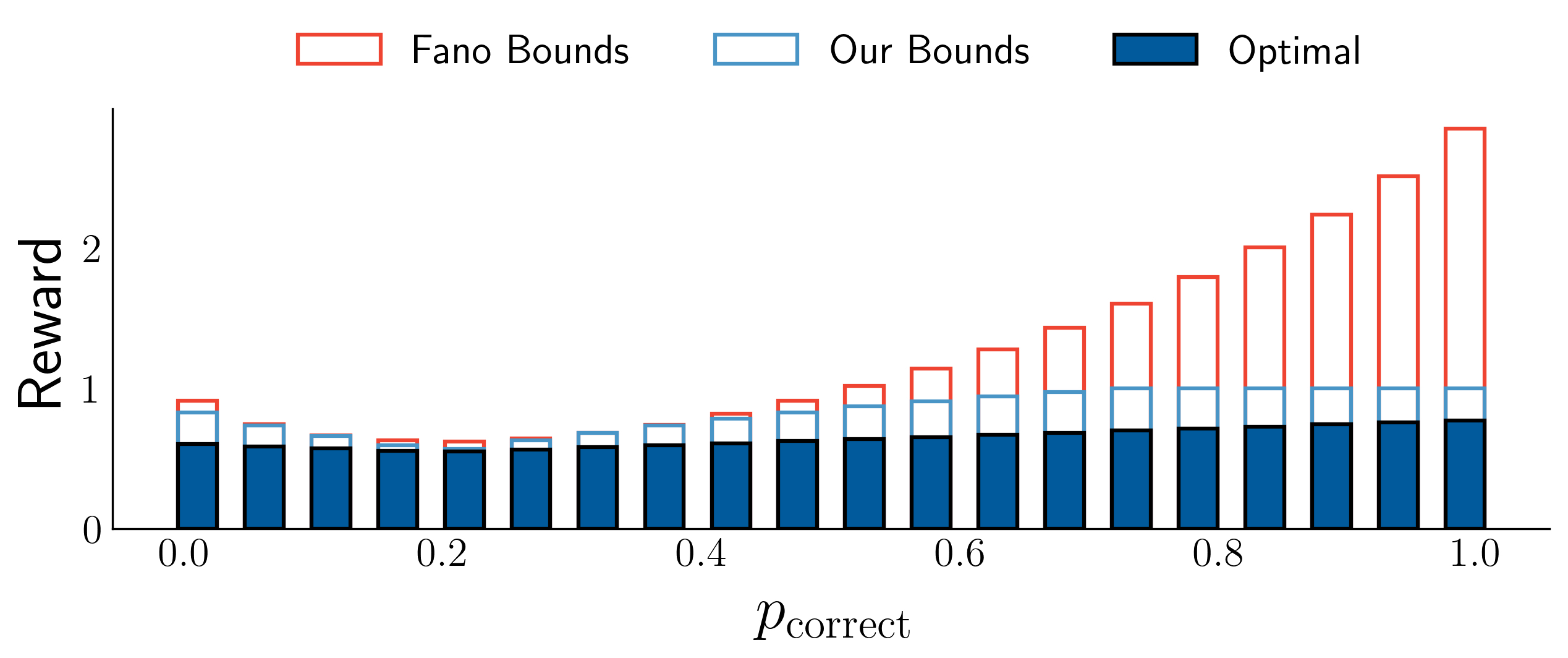}
    \end{center}
    \vspace{-15pt}
    \caption{Results for the one-step lava problem. We compare the upper bounds on achievable expected rewards computed via our approach (using KL divergence) with the bounds computed via generalized Fano's inequality.}
    \label{fig:fano}
\end{figure}


\subsection{Catching a Falling Object}

Next, we consider a problem with continuous state and observation spaces. The goal of the robot is to catch a freely-falling object such as a ball (Figure \ref{fig:ball catching}).

{\bf Dynamics.} We describe the four-dimensional state of the robot-ball system by $s_t := [x_t^\text{rel}, y_t^\text{rel}, v_t^\text{x,ball}, v_t^\text{y,ball}] \in \mathbb{R}^4$, where $[x_t^\text{rel}, y_t^\text{rel}]$ is the relative position of the ball with respect to the robot, and $[v_t^\text{x,ball}, v_t^\text{y,ball}]$ corresponds to the ball's velocity. The action $a_t$ is the horizontal speed of the robot and can be chosen within the range $[-0.4,0.4]$m/s (discretized in increments of 0.1m/s).
The dynamics of the system are given by: 
\begin{equation}
\label{eq:catching dynamics}
s_{t+1} = 
\begin{bmatrix}
\vspace{5pt}
x_{t+1}^\text{rel} \\
\vspace{5pt}
y_{t+1}^\text{rel} \\
\vspace{5pt}
v_{t+1}^\text{x,ball} \\
\vspace{5pt}
v_{t+1}^\text{y,ball}
\end{bmatrix}
=
\begin{bmatrix}
\vspace{5pt}
x_{t}^\text{rel} + (v_{t}^\text{x,ball} - a_t) \Delta t  \\
\vspace{5pt}
y_{t}^\text{rel} + \Delta t v_{t}^\text{y,ball} \\
\vspace{5pt}
v_{t}^\text{x,ball} \\
\vspace{5pt}
v_{t}^\text{y,ball} - g \Delta t
\end{bmatrix},
\end{equation}
where $\Delta t = 1$ is the time-step and $g=0.1$m/s$^2$ is chosen such that the ball reaches the ground within a time horizon of $T=5$. The initial state distribution $p_0$ is chosen to be a Gaussian with mean $[0.0 \text{m},1.05\text{m},0.0\text{m/s},0.05\text{m/s}]$ and diagonal covariance matrix $\text{diag}([0.01^2, 0.1^2, 0.2^2, 0.1^2])$. 

{\bf Sensor.} The robot's sensor provides a noisy state estimate $o_t = s_t + \epsilon_t$, where $\epsilon_t$ is drawn from a Gaussian distribution with zero mean and diagonal covariance matrix $\eta \cdot \text{diag}([0.5^2, 1.0^2, 0.75^2, 1.0^2])$. Here, $\eta$ is a noise scale that we will vary in our experiments. 

{\bf Rewards.} The reward function $r_t(s_t,a_t)$ is chosen to encourage the robot to track the ball's motion. In particular, we choose $r_t(s_t,a_t) = \max(1 - 2|x_t^\text{rel}|, 0)$. The reward is large when $x_\text{rel}$ is close to 0 (with a maximum reward of 1 when $x_t^\text{rel} = 0$); the robot receives no reward if $|x_t^\text{rel}| \geq 0.5$. The robot's goal is to maximize the expected cumulative reward over a time horizon of $T=5$. 

 \begin{figure}[t]
\begin{center}
\includegraphics[width=0.99\columnwidth]{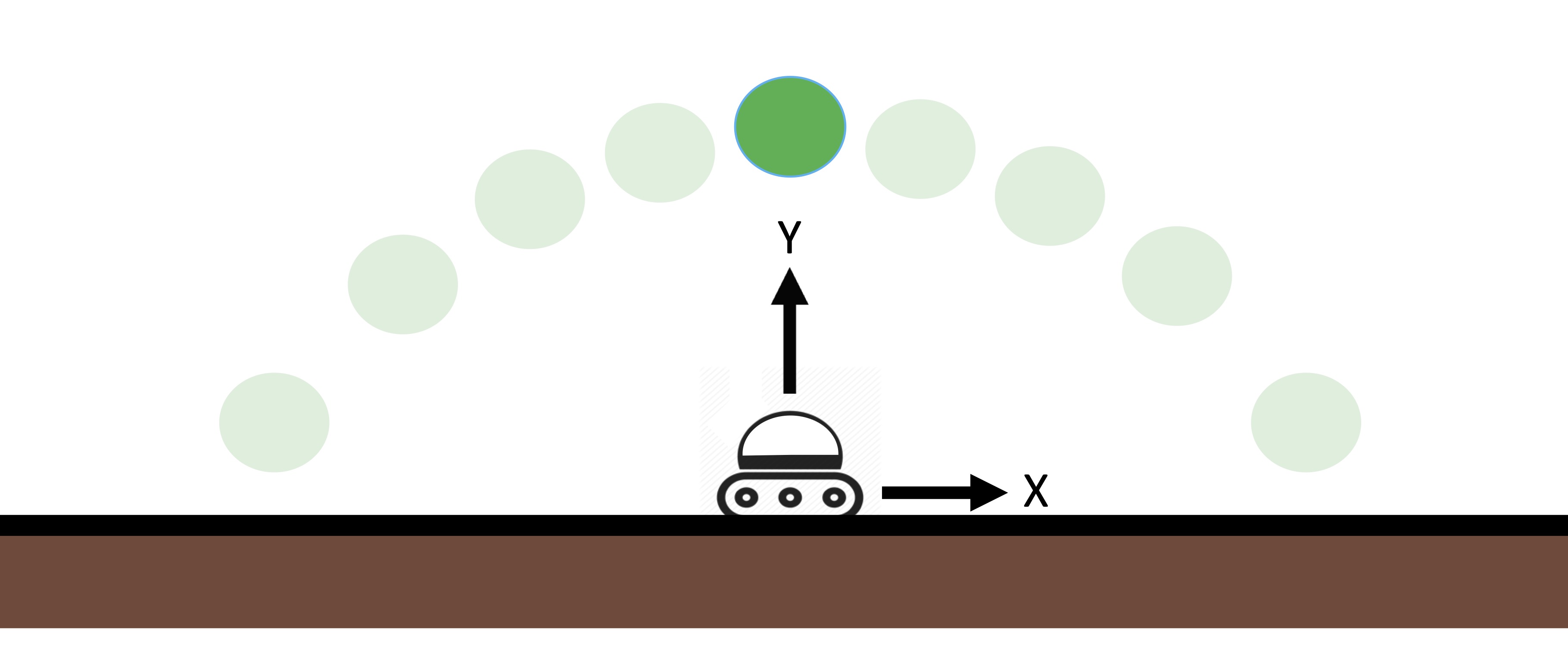}
\end{center} 
\vspace{-5pt}
\caption{An illustration of the ball-catching example with continuous state and observation spaces. The robot is constrained to move horizontally along the ground and can control its speed. Its goal is to track the position of the falling ball using a noisy estimate of the ball's state. \label{fig:ball catching}}
\end{figure}

\begin{figure}[t]
\begin{center}
\includegraphics[width=0.99\columnwidth]{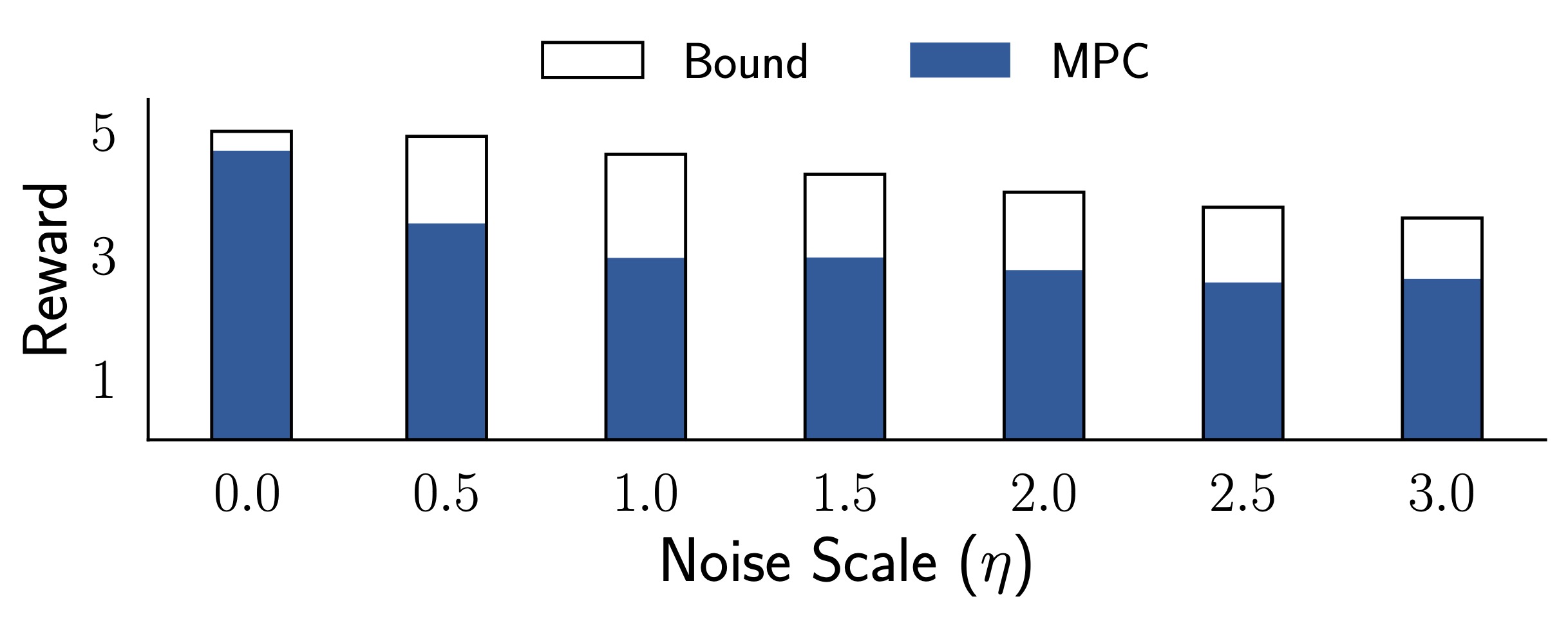}
\end{center} 
\vspace{-10pt}
\caption{Results for the ball-catching example. We compare the upper bounds on achievable expected rewards with the expected rewards using MPC combined with Kalman filtering for different values of sensor noise (results for MPC are averaged across five evaluation seeds; the std. dev. across seeds is too small to visualize).  \label{fig:ball catching results}}
\vspace{-12pt}
\end{figure}

{\bf Results.} Unlike the lava problem, this problem does not admit a good open-loop policy since the initial distribution on $v_{0}^\text{x,ball}$ is symmetric about 0; thus, the robot does not have \emph{a priori} knowledge of the ball's x-velocity direction (as illustrated in Figure \ref{fig:ball catching}). In this example, we choose KL divergence as the specific $f$-divergence to compute bounds with. Figure \ref{fig:ball catching results} plots the upper bound on the expected cumulative reward obtained using Algorithm~\ref{a:DP} for different settings of the observation noise scale $\eta$. Since the dynamics \eqref{eq:catching dynamics} are affine, the sensor model is Gaussian, and the initial state distribution is also Gaussian, we can apply the techniques described in Section \ref{sec: analytic computation} for \emph{analytically} computing the quantities of interest in Algorithm~\ref{a:DP}. 

Figure \ref{fig:ball catching results} also compares the upper bounds on \revision{the highest achievable expected reward $\revision{R_{0\to T}^\star}$ with \emph{lower bounds} on this quantity. To do this, we note that the expected reward achieved by any \emph{particular}  control policy provides a lower bound on $\revision{R_{0\to T}^\star}$. In this example, we compute lower bounds by computing the expected reward achieved by} a model-predictive control (MPC) scheme combined with a Kalman filter for state estimation. We estimate this expected reward using 100 initial conditions sampled from $p_0$. Figure \ref{fig:ball catching results} plots the average of these expected rewards across five random seeds (the resulting standard deviation is too small to visualize). As the figure illustrates, the MPC controller obeys the fundamental bound on reward computed by our approach. Moreover, the performance of the controller qualitatively tracks the degradation of achievable performance predicted by the bound as $\eta$ is increased. Finally, we observe that sensors with noise scales $\eta = 1$ and higher are \emph{fundamentally limited} as compared to a noiseless sensor. This is demonstrated by the fact that the MPC controller for $\eta = 0$ achieves higher performance than the fundamental limit on performance for $\eta = 1$.

\subsection{Obstacle Avoidance with a Depth Sensor}

For our final example, we consider the problem of obstacle avoidance using a depth sensor (Figure \ref{fig:anchor}). This is a more challenging problem with higher-dimensional (continuous) state and observation spaces, and non-Gaussian sensor models. 

{\bf State and action spaces.} The robot is initialized at the origin with six cylindrical obstacles of fixed radius placed randomly in front of it. The state $s_t \in \mathbb{R}^{12}$ of this system describes the locations of these obstacles in the environment. In addition, we also place ``walls" enclosing a workspace $[-1,1]$m $\times$ $[-0.1,1.2]$m (these are not visualized in the figure to avoid clutter). The initial state distribution $p_0$ corresponds to uniformly randomly choosing the x-y locations of the cylindrical obstacles from the set $[-1,1]$m $\times$ $[0.9,1.1]$m. The robot's goal is to navigate to the end of the workspace by choosing a motion primitive to execute (based on a noisy depth sensor described below). Figure \ref{fig:anchor} illustrates the set of ten motion primitives the robot can choose from; this set corresponds to the action space. 

{\bf Rewards.} We treat this problem as a one-step decision making problem (Section \ref{sec:one-step}). Once the robot chooses a motion primitive to execute based on its sensor measurements, it receives a reward of 0 if the motion primitive results in a collision with an obstacle; if the motion primitive results in collision-free motion, the robot receives a reward of 1. The expected reward for this problem thus corresponds to the probability of safe (i.e., collision-free) motion. 

{\bf Sensor.} The robot is equipped with a depth sensor which provides distances along $n_\text{rays}=10$ rays. The sensor has a field of view of $90^\circ$ and a maximum range of $1.5$m.  We use the noise model for range finders described in Ch. 6.3 in the work by \cite{Thrun05} and consider two kinds of measurement errors: (i) errors due to failures to detect obstacles, and (ii) random noise in the reported distances. For each ray, there is a probability $p_\text{miss}=0.1$ that the sensor misses an obstacle and reports the maximum range ($1.5$m) instead. In the case that an obstacle is not missed, the distance reported along a given ray is sampled from a Gaussian with mean equal to the true distance along that ray and std. dev. equal to $\eta$. The noise for each ray is sampled independently. Overall, this is a non-Gaussian sensor model due to the combination of the two kinds of errors. 

{\bf Results: varying sensor noise.} We implement Algorithm~\ref{a:DP} (with KL divergence) using the sampling-based techniques described in Section \ref{sec:sampling}. We sample $20$K obstacle environments for upper bounding the open-loop rewards associated with each action. We also utilize $20$K batches (each of size $K=1000$) for upper bounding the mutual information using \eqref{eq:variational MI bound}. We utilize a version of Hoeffding's inequality (Theorem~\ref{thm:hoeffding}) presented in Appendix~\ref{app:hoeffding} to obtain an upper bound on $\revision{R_{0\to T}^\star}$ that holds with probability $1-\delta = 0.95$.

\begin{figure}[t]
\begin{center}
\includegraphics[width=0.99\columnwidth]{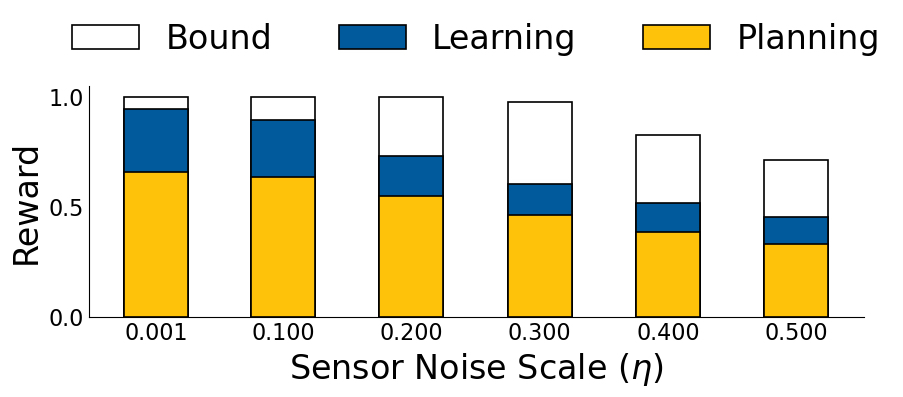}
\end{center} 
\vspace{-5pt}
\caption{\revision{Comparing sensors with varying levels of noise for the obstacle avoidance problem. Upper bounds on achievable expected rewards are compared with the expected rewards using (i) a learned neural network policy and (ii) a heuristic planner.} \label{fig:planning results}}
\vspace{-12pt}
\end{figure}

\begin{figure}[t]
\begin{center}
\includegraphics[width=0.99\columnwidth]{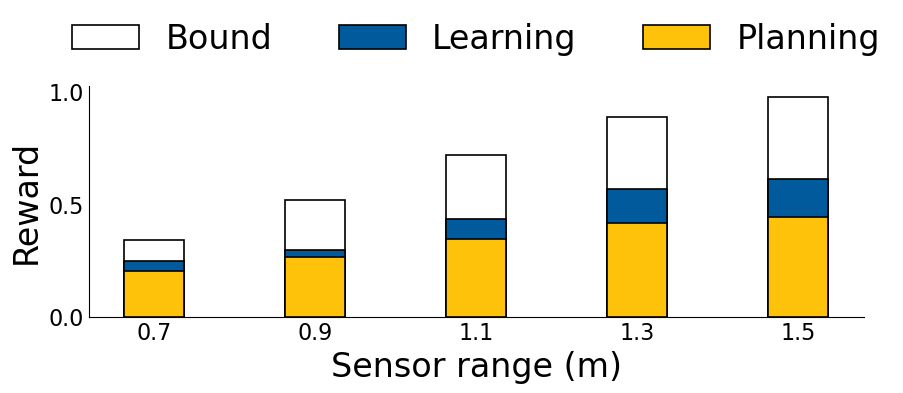}
\end{center} 
\vspace{-5pt}
\caption{\revision{Comparing sensors with varying distance ranges for the obstacle avoidance problem. Upper bounds on achievable expected rewards are compared with the expected rewards using (i) a learned neural network policy and (ii) a heuristic planner.} \label{fig:planning results range}}
\vspace{-12pt}
\end{figure}

Figure \ref{fig:planning results} shows the resulting upper bounds for different values of the observation noise standard deviation $\eta$. We compare these upper bounds with rewards achieved by neural network policies trained to perform this task; \revision{these rewards provide lower bounds on the highest achievable rewards for each $\eta$}. For each $\eta$, we sample 5000 training environments and corresponding sensor observations. For each environment, we generate a ten-dimensional training label by recording the minimum distance to the obstacles achieved by each motion primitive and passing the vector of distances through an (element-wise) softmax transformation. We use a cross-entropy loss to train a neural network that predicts the label of distances for each primitive given a sensor observation as input. We use two fully-connected layers with a \texttt{ReLu} nonlinearity; the output is passed through a softmax layer. At test-time, a given sensor observation is passed as input to the trained neural network; the motion primitive corresponding to the highest predicted distance is then executed. We estimate the expected reward achieved by the trained policy using 5000 test environments (unseen during training). Figure \ref{fig:planning results} plots the average of these expected rewards across five training seeds for each value of $\eta$ (the std. dev. across seeds is too small to visualize). \revision{Figure~\ref{fig:planning results} also presents rewards achieved by a heuristic planner that chooses the motion primitive with the largest estimated distance to the obstacles.} 

\revision{As expected, the rewards from both the learning- and planning-based approaches are lower than the fundamental limits computed using our approach.}
This highlights the fact that our upper bounds provide fundamental limits on performance that hold regardless of the size of the neural network, the network architecture, or algorithm used for synthesizing the policy. We also highlight the fact that the neural network policy for a sensor with noise $\eta=0.1$ achieves higher performance than the fundamental limit for a sensor with noise $\eta=0.4$ or $0.5$.  

\revision{{\bf Results: varying sensor range.} Next, we compare depth sensors with different distance ranges in Figure~\ref{fig:planning results range}. For this comparison, we fix $\eta = 0.3$ and $p_\text{miss} = 0.05$. Consistent with intuition, the upper bounds on achievable performance increase as the range of the sensors increase. The performance of the learning- and heuristic planning-based approaches also improve with the range, while remaining below the fundamental limits. We highlight that the neural network policy for a sensor with range 1.5m surpasses the fundamental limit for a sensor with range 0.9m.}

\revision{{\bf Results: varying sensor resolution.}} Finally, we apply our approach to compare two sensors with varying number $n_\text{rays}$ of rays along which the depth sensor provides distance estimates (Figure~\ref{fig:anchor}). For this comparison, we fix $\eta = 0.3$ and $p_\text{miss} = 0.05$. We compare two sensors with $n_\text{rays} = 50$ (Sensor 1) and $n_\text{rays} = 5$ (Sensor 2) respectively. The upper bound on expected reward computed using our approach (with confidence $1-\delta = 0.95$) for Sensor~2 is 0.79. A neural network policy for Sensor 1 achieves an expected reward of approximately 0.86, which surpasses the fundamental limit on performance for Sensor 2.

\section{Discussion and Conclusions}
\label{sec:conclusion}

We have presented an approach for establishing fundamental limits on performance for sensor-based robot control and policy learning. We defined a quantity that captures the amount of task-relevant information provided by a sensor; using a novel version of the generalized Fano inequality, we demonstrated that this quantity upper bounds the expected reward for one-step decision making problems. We developed a dynamic programming approach for extending this bound to multi-step problems. The resulting framework has potentially broad applicability to robotic systems with continuous state and observation spaces, nonlinear and stochastic dynamics, and non-Gaussian sensor models. 
Our numerical experiments demonstrate the ability of our approach to establish strong bounds on performance for such settings. In addition, we provided an application of our approach for comparing different sensors and establishing the superiority of one sensor over another for a given task. 

{\bf Challenges and future work.} There are a number of challenges and directions for future work associated with our approach. On the theoretical front,  
an interesting direction is to handle settings where the sensor model is inaccurate (in contrast to this paper, where we have focused on establishing fundamental limits given a particular sensor model). For example, one could potentially perform an adversarial search over a family of sensor models in order to find the model that results in the lowest bound on achievable performance.

On the algorithmic front, the primary challenges are: (i) efficient computation of bounds for longer time horizons, and (ii) extensions to continuous action spaces. As presented, Algorithm~\ref{a:DP} requires an enumeration over action sequences. Finding more computationally efficient versions of Algorithm~\ref{a:DP} is thus an important direction for future work. The primary bottleneck in extending our approach to continuous action spaces is the need to perform a supremization over actions when computing $\revision{\bar{R}_{t\to T}^{\perp \star}}$ in Algorithm~\ref{a:DP}. However, we note that any upper bound on $\revision{\bar{R}_{t\to T}^{\perp \star}}$ also leads to a valid upper bound on \revision{$R^\star_{0\to T}$}. Thus, one possibility is to use a Lagrangian relaxation \citep{Boyd04} to upper bound $\revision{\bar{R}_{t\to T}^{\perp \star}}$ in settings with continuous action spaces. \revision{Another current limitation in computing the bounds for systems with continuous state and observation spaces (Section~\ref{sec:sampling}) is the need for a probability density function for the sensor model $\sigma_t$ than can be numerically evaluated given any particular state-observation pair. This may be addressed by leveraging the broad range of numerical techniques that exist for mutual information estimation \citep{Poole19, Song19}.}

Our work also opens up a number of exciting directions for longer-term research. While we have focused on establishing fundamental limits imposed by imperfect sensing in this work, one could envision a broader research agenda that seeks to establish bounds on performance due to other limited resources (e.g., onboard computation or memory). One concrete direction is to combine the techniques presented here with information-theoretic notions of bounded rationality \citep{Tishby11, Ortega15, Pacelli21}. Finally, another exciting direction is to turn the impossibility results provided by our approach into certificates of \emph{robustness} against an adversary. Specifically, consider an adversary that can observe our robot's actions; if one could establish fundamental limits on performance \emph{for the adversary} due to its inability to infer the robot's internal state (and hence its future behavior) using past observations, this provides a certificate of robustness against \emph{any} adversary. This is reminiscent of a long tradition in cryptography of turning impossibility or hardness results into robust protocols for security \citep[Ch. 9]{Aumasson17}. 

\vspace{5pt}

Overall, we believe that the ability to establish fundamental limits on performance for robotic systems is crucial for establishing a science of robotics. We hope that the work presented here along with the indicated directions for future work represent a step in this direction.  

\quad

\section*{Acknowledgments}
The authors were partially supported by the NSF CAREER Award [$\#$2044149] and the Office of Naval Research [N00014-21-1-2803]. The authors would like to thank Alec Farid and David Snyder for helpful discussions on this work. 


\bibliographystyle{unsrtnat}
\balance
\bibliography{references}

\newpage
\onecolumn

\begin{appendices}
\section{Proofs}
\label{app:proofs}

\revision{\propmonotone*}
\begin{proof}
Recall that the $f$-inverse is defined using the following optimization problem with decision variable $p$:
\begin{equation}
\label{eq:f inverse opt}
    \mathbb D_f^{-1}(q | c) := \sup \ \{p \in [0,1] \ | \ \mathbb D_{f,\mathcal B}(p \| q) \leq c \}. 
\end{equation}
where
\begin{equation*}
\mathbb D_{f,\mathcal B}(p||q):= \mathbb D_f(\mathcal B_p||\mathcal B_q)=qf\left(\frac{p}{q}\right)+(1-q)f\left(\frac{1-p}{1-q}\right). \label{eq:f Ber}
\end{equation*}

The monotonicity condition consists of two parts\revision{, as stated above.}

To prove 1), we see that increasing $c$ loosens constraint to the optimization problem (\ref{eq:f inverse opt}), thus the $f$-inverse can only take values larger than or equal to the ones corresponding to smaller $c$'s.

To prove 2), \revision{we first show that $\mathbb D_{f,\mathcal B}(p||q)$ is monotonically non-increasing in $q$ for any fixed $p$.}
For notational simplicity, we drop the subscripts for $f$-divergence \revision{and} denote $\mathbb D_{f,\mathcal B}(p||q)$ by $ \mathbb D(p,q)$.

We note that \revision{$\mathbb D(p,q)$ is directionally differentiable, since $f$ is a one-dimensional convex function. }
For fixed $p$, the directional derivatives in the positive and negative $q$ directions are:
\begin{align*}
\mathbb D'_\pm(p,q\pm\delta) &=\frac{\partial}{\partial q_\pm}\left[qf\left(\frac{p}q{}\right)+(1-q)f\left(\frac{1-p}{1-q}\right)\right]\\
&=f\left(\frac{p}{q}\right)-f\left(\frac{1-p}{1-q}\right)+q\frac{\partial}{\partial q_\pm}f\left(\frac{p}{q}\right)+(1-q)\frac{\partial}{\partial q_\pm}f\left(\frac{1-p}{1-q}\right)\\
&=f(x)-f(y)+y f'_\pm(y)-xf'_\pm(x)\\
&=\underbrace{f(x)-f(y)+f'_\pm(x)(y-x)}_{(a)}+\underbrace{y(f'_\pm(y)-f'_\pm(x))}_{(b)}.
\end{align*}
    
\begin{itemize}
    \item In (a), 
    for all $0<t\leq 1$, we have by convexity of $f$,
    \begin{equation*}
        f(x+t(y-x))\leq (1-t)f(x)+tf(y).
    \end{equation*}
    Dividing both sides by $t$,
    \begin{equation*}
        f(y)-f(x)\geq \frac{f(x+t(y-x))-f(x)}{t}.
    \end{equation*}
    If we take the limit as $t\to0$ from the right side of the $x$-axis, then the right directional derivatives in the direction of $y-x$ \revision{is}:
    \begin{equation}
        \label{eq:right direction}
        f'_+(x)(y-x)=\lim_{t\to 0^+}\frac{f(x+t(y-x))-f(x)}{t} \leq f(y)-f(x).
    \end{equation}
    \revision{Similarly, for $-1\leq t < 0$, we have,
    \begin{align*}
        f(x+t(x-y))&=f(x+(-t)(y-x))\leq (1+t)f(x)-tf(y),\\
        (-t)(f(y)-f(x))&\geq f(x+t(x-y))-f(x),\\
        f(y)-f(x)&\geq \frac{f(x)-f(x+t(x-y))}{t}.
    \end{align*}
    The left directional derivative in the direction of $y-x$ becomes:
    \begin{equation}
        \label{eq:left direction}
        f'_-(x)(y-x)=\lim_{t\to 0^-}\frac{f(x)-f(x+t(x-y))}{t} \leq f(y)-f(x).
    \end{equation}
    Combining \eqref{eq:right direction} and \eqref{eq:left direction}, we see that} $(a) = f(x)-f(y)+f'_\pm(x)(y-x)\leq 0$. 
    \item In $(b)$, we know that $y=\frac{1-p}{1-q}>0$. We also know that the optimal value $p^\star$ for problem (\ref{eq:f inverse opt}) is greater than or equal to $q$. So, $x^\star = \frac{p^\star}{q}\geq 1, y^\star=\frac{1-p^\star}{1-q}\leq 1, x^\star\geq y^\star$. For the convex function $f$, the directional derivatives $f'_\pm$ are nondecreasing. Therefore, for $x^\star>y^\star, f'_\pm(x^\star)\geq f'_\pm(y^\star)$. It then follows that $(b)\leq0$.
\end{itemize}

\revision{Therefore, $\mathbb D'_\pm(p,q\pm\delta)\leq 0$, $\mathbb D(p,q)$ is non-increasing in $q$ for fixed $p$, and $q\leq q'\Rightarrow \mathbb D(p,q)\geq \mathbb D(p,q')$. Thus, for all fixed $c\geq 0, \mathbb D(p,q)\leq c$ implies $\mathbb D(p,q')\leq c$. Noting that this is the constraint for the optimization problem \eqref{eq:f inverse opt}, we see that any feasible point $p$ for $\mathbb D_f^{-1}(q|c)$ is also a feasible point for $\mathbb D_f^{-1}(q'|c)$. In other words, $q\leq q'$ implies $\mathbb D_f^{-1}(q|c)\leq\mathbb D_f^{-1}(q'|c)$, or $f$-inverse is monotonically non-decreasing in $q$ for fixed $c$.}

\end{proof}


\quad

\thmonestep*

\begin{proof}
For a given policy $\pi_0$, define:
\begin{equation*}
    \revision{R_{0\to 1}} := \underset{p_0(s_0)}{\mathbb{E}} \ \underset{\sigma_0(o_0|s_0)}{\mathbb{E}}[r_0(s_0, \pi_0(o_0))],
\end{equation*}
and
\begin{equation*}
    \revision{\tilde{R}_{0\to 1}} := \underset{p_0(s_0)}{\mathbb{E}} \ \underset{q(o_0)}{\mathbb{E}}[r_0(s_0, \pi_0(o_0))].
\end{equation*}

The only difference between $R$ and $\revision{\tilde R_{0\to 1}}$ is that the observations $o_0$ in $\revision{\tilde R_{0\to 1}}$ are drawn from a state-\emph{independent} distribution $q$. 

Now, assuming bounded rewards (Assumption~\ref{ass:bounded rewards}), we have:
\begin{align}
    \fb(\revision{R_{0\to 1}}\| \revision{\tilde R_{0\to 1}}) &= \fb \Bigg{(} \underset{p_0(s_0)}{\mathbb{E}} \ \underset{\sigma_0(o_0|s_0)}{\mathbb{E}}[r_0(s_0, \pi_0(o_0))] \ \Big{\|} \ \underset{p_0(s_0)}{\mathbb{E}} \ \underset{q(o_0)}{\mathbb{E}}[r_0(s_0, \pi_0(o_0))] \Bigg{)} \nonumber\\
    &\leq \underset{p_0(s_0)}{\mathbb{E}} \ \fb \Bigg{(} \underset{\sigma_0(o_0|s_0)}{\mathbb{E}}[r_0(s_0, \pi_0(o_0))] \ \Big{\|} \ \underset{q(o_0)}{\mathbb{E}}[r_0(s_0, \pi_0(o_0))] \Bigg{)} \nonumber\\
    &\leq \underset{p_0(s_0)}{\mathbb{E}} \ \mathbb D_f \Big{(} \sigma_0(o_0|s_0) \ \Big{\|} \ q(o_0) \Big{)}. \label{eq:one step DPI}
\end{align}

The first inequality above follows from Jensen's inequality, while the second follows from the data processing inequality (see Corollary 2 by \cite{Gerchinovitz20} for the specific version). \revision{For notational simplicity, we denote the right-hand side of equation \eqref{eq:one step DPI} by $\mathbb J^q:=\underset{p_0(s_0)}{\mathbb{E}} \ \mathbb D_f \Big{(} \sigma_0(o_0|s_0) \ \Big{\|} \ q(o_0) \Big{)}$. Then, inverting the bound using the $f$-inverse:
\begin{equation*}
    R_{0\to 1}\leq\mathbb D^{-1}_f(\tilde R_{0\to 1}| \mathbb J^q).
\end{equation*}}

\revision{Taking supremum over policies on both sides gives:
\begin{equation*}
\sup_{\pi_0}\,R_{0\to 1}\leq\sup_{\pi_0}\mathbb D_f^{-1}(\tilde R_{0\to 1}| \mathbb J^q).
\end{equation*}}

\revision{On the RHS, $\tilde R_{0\to 1}$ depends on $\pi_0$ but $\mathbb J^q(\sigma_0, p_0, q)$ is independent of policy. Thus, from the monotonicity of the $f$-inverse (Proposition \ref{prop:f inverse properties}) and by definition of the highest achievable expected reward, we have:
\begin{equation}
R^\star_{0\to 1}(\sigma_0;r_0):=\sup_{\pi_0}\,R_{0\to 1}\leq\mathbb D_f^{-1}(\sup_{\pi_0}\tilde R_{0\to 1}| \mathbb J^q).\label{eq:J-dependent bound}
\end{equation}}

\revision{Next, we show that $\sup_{\pi_0}\tilde R_{0\to 1}$ is equal to the highest expected reward achieved by open-loop policies, and thus is independent of the distribution $q$. Using the Fubini-Tonelli theorem, we see:}
\begin{align*}
    \sup_{\pi_0} \ \revision{\tilde R_{0\to 1}} &= \sup_{\pi_0} \ \underset{p_0(s_0)}{\mathbb{E}} \ \underset{q(o_0)}{\mathbb{E}}[r_0(s_0, \pi_0(o_0))] \\
    &= \sup_{\pi_0} \  \underset{q(o_0)}{\mathbb{E}} \ \underset{p_0(s_0)}{\mathbb{E}}[r_0(s_0, \pi_0(o_0))] \\
    &\leq \underset{q(o_0)}{\mathbb{E}} \ \sup_{\pi_0} \ \underset{p_0(s_0)}{\mathbb{E}}[ r_0(s_0, \pi_0(o_0))] \\
    &= \underset{q(o_0)}{\mathbb{E}} \ \sup_{a_0} \ \underset{p_0(s_0)}{\mathbb{E}}[ r_0(s_0, a_0)] \\
    &= \sup_{a_0} \ \underset{p_0(s_0)}{\mathbb{E}}[ r_0(s_0, a_0)].
\end{align*}

Since open-loop actions are special cases of policies, we also have:
\begin{equation*}
    \sup_{\pi_0} \ \revision{\tilde R_{0\to 1}} = \sup_{\pi_0} \ \underset{p_0(s_0)}{\mathbb{E}} \ \underset{q(o_0)}{\mathbb{E}}[r_0(s_0, \pi_0(o_0))] \ \geq \ \sup_{a_0} \ \underset{p_0(s_0)}{\mathbb{E}}[ r_0(s_0, a_0)].
\end{equation*}

As a result, we see that:
\begin{align*}
    \sup_{\pi_0} \ \revision{\tilde R_{0\to 1}} = \sup_{a_0} \ \underset{p_0(s_0)}{\mathbb{E}}[ r_0(s_0, a_0)] = \revision{R_{0\to 1}^\perp},
\end{align*}
where \revision{$R_{0\to 1}^\perp$} is as defined in \eqref{eq:R0 perp}. \revision{Now, taking the infimum over the state-independent distribution $q$ on both sides of equation \eqref{eq:J-dependent bound}:
\begin{equation}
R_{0\to 1}^\star (\sigma_0;r_0)= \inf_q R_{0\to 1}^\star(\sigma_0;r_0)\leq\mathbb D_f^{-1}(R_{0\to 1}^\perp| \inf_q\mathbb J^q).\label{eq:one step f inverse bound}
\end{equation}}

\revision{From the definition of $f$-informativity \citep{csiszar_class_1972}, we note that: 
\begin{equation*}
    \I_f(o_0;s_0) = \inf_q \ \underset{p_0(s_0)}{\mathbb{E}} \ \mathbb D_f \Big{(} \sigma_0(o_0|s_0) \ \Big{\|} \ q(o_0) \Big{)}=\inf_q \mathbb J^q. 
\end{equation*}}

\revision{Combining this with \eqref{eq:one step f inverse bound}, we obtain the desired result:
\begin{equation*}
    R_{0\to 1}^\star(\sigma_0; r_0) \leq \finv \Big{(} R_{0\to 1}^\perp | \ \I_f(o_0; s_0) \Big{)} =: \tau(\sigma_0; r_0).
\end{equation*}}

\end{proof}


\proprecursion*
\begin{proof}
The proof follows a similar structure to that of Theorem \ref{thm:single-step bound}. 

First, note that \revision{$R_{t\to T}$} defined in \eqref{eq:Rt} can be written as:
\begin{equation}
    \revision{R_{t\to T}} = \underset{\substack{s_t| \\ a_{0:t-1}}}{\mathbb E} \ \underset{o_t|s_t}{\mathbb E} \ \underset{\substack{s_{t+1:T-1}, o_{t+1:T-1} | \\ s_t,o_t}}{\mathbb E} \ \Bigg{[} \sum_{k=t}^{T-1} r_k(s_k, \revision{\pi^k_t}(o_{t:k})) \Bigg{]}. \nonumber
\end{equation}

Define:
\begin{equation}
    \revision{\tilde{R}_{t\to T}} := \underset{\substack{s_t| \\ a_{0:t-1}}}{\mathbb E} \ \underset{\substack{q(o_t)}}{\mathbb E} \ \underset{\substack{s_{t+1:T-1}, o_{t+1:T-1} | \\ s_t,o_t}}{\mathbb E} \ \Bigg{[} \sum_{k=t}^{T-1} r_k(s_k, \revision{\pi^k_t}(o_{t:k})) \Bigg{]}. \nonumber
\end{equation}

The only difference between $\revision{\tilde R_{t\to T}}$ and \revision{$R_{t\to T}$} is that the observations $o_t$ in $\revision{\tilde R_{t\to T}}$ are drawn from a state-\emph{independent} distribution $q$.

For the sake of notational simplicity, we will assume that \revision{$R_{t\to T}$} and $\revision{\tilde R_{t\to T}}$ have been normalized to be within $[0,1]$ by scaling with $1/(T-t)$. The desired result \eqref{eq:history-dependent induction} then follows from the bound we prove below by simply rescaling with $(T-t)$. 

Now, 
\begin{small}
\begin{align}
    \fb(\revision{R_{t\to T}} \| \revision{\tilde R_{t\to T}}) = \ &\fb\Bigg{(} \underset{\substack{s_t| \\ a_{0:t-1}}}{\mathbb E} \ \underset{\substack{o_t | \\ s_t}}{\mathbb E} \ \underset{\substack{s_{t+1:T-1}, o_{t+1:T-1} | \\ s_t,o_t}}{\mathbb E} \ \Bigg{[} \sum_{k=t}^{T-1} r_k(s_k, \revision{\pi_t^k}(o_{t:k})) \Bigg{]} \ \Big{\|} \  \underset{\substack{s_t| \\ a_{0:t-1}}}{\mathbb E} \ \underset{\substack{q(o_t)}}{\mathbb E} \ \underset{\substack{s_{t+1:T-1}, o_{t+1:T-1} | \\ s_t,o_t}}{\mathbb E} \ \Bigg{[} \sum_{k=t}^{T-1} r_k(s_k, \revision{\pi_t^k}(o_{t:k})) \Bigg{]} \Bigg{)} \nonumber \\
    &\leq \underset{\substack{s_t| \\ a_{0:t-1}}}{\mathbb E} \ \fb \Bigg{(} \underset{\substack{o_t | \\ s_t}}{\mathbb E} \ \ \underset{\substack{s_{t+1:T-1}, o_{t+1:T-1} | \\ s_t,o_t}}{\mathbb E} \ \Bigg{[} \sum_{k=t}^{T-1} r_k(s_k, \revision{\pi_t^k}(o_{t:k})) \Bigg{]} \ \Big{\|} \  \underset{\substack{q(o_t)}}{\mathbb E} \ \ \underset{\substack{s_{t+1:T-1}, o_{t+1:T-1} | \\ s_t,o_t}}{\mathbb E} \ \Bigg{[} \sum_{k=t}^{T-1} r_k(s_k, \revision{\pi_t^k}(o_{t:k})) \Bigg{]} \Bigg{)} \nonumber \\
    &\leq \underset{\substack{s_t| \\ a_{0:t-1}}}{\mathbb E} \ \revision{\mathbb D_f}\Big{(} \sigma_t(o_t|s_t) \ \Big{\|} \ q(o_t)\Big{)}. \label{eq:f bound}
\end{align}
\end{small}
The first inequality above follows from Jensen's inequality, while the second follows from the data processing inequality (see Corollary 2 by \cite{Gerchinovitz20} for the specific version). \revision{We denote the RHS of \eqref{eq:f bound} by $\mathbb J^q:=\underset{\substack{s_t| \\ a_{0:t-1}}}{\mathbb E} \ \mathbb D_f\Big{(} \sigma_t(o_t|s_t) \ \Big{\|} \ q(o_t)\Big{)}$ and invert the bound:
\begin{equation*}
    \revision{R_{t\to T}} \leq \finv \Big{(} \revision{\tilde R_{t\to T}} \ | \ \ \mathbb J^q \Big{)}.
\end{equation*}}

\revision{Taking supremum over policies on both sides gives:
\begin{equation*}
    \sup_{\pi^t_t, \dots, \pi_t^{T-1}} \ R_{t\to T} \ \leq \ \sup_{\pi^t_t, \dots, \pi_t^{T-1}} \ \finv \Big{(} \tilde{R}_{t\to T} \ | \ \ \mathbb J^q \Big{)}.
\end{equation*}}


Notice that the LHS is precisely the quantity we are interested in upper bounding in Proposition \ref{prop:recursion}.
From the monotonicity of the $f$-inverse (\revision{Proposition \ref{prop:f inverse properties}}), we have: 
\begin{equation}
\label{eq:Rt bound}
    \sup_{\pi^t_t, \dots, \revision{\pi_t^{T-1}}} \ \revision{R_{t\to T}} \  \leq \ \finv \Big{(} \revision{\tilde{R}_{t\to T}^\star} \ | \ \ \revision{\mathbb J^q} \Big{)},
\end{equation}
where
\begin{equation*}
    \revision{\tilde R_{t\to T}^\star} := \sup_{\pi^t_t, \dots, \revision{\pi_t^{T-1}}} \ \revision{\tilde R_{t\to T}} = \sup_{\pi^t_t, \dots, \revision{\pi_t^{T-1}}} \ \underset{\substack{s_t| \\ a_{0:t-1}}}{\mathbb E} \ \underset{\substack{q(o_t)}}{\mathbb E} \ \underset{\substack{s_{t+1:T-1}, o_{t+1:T-1} | \\ s_t,o_t}}{\mathbb E} \ \Bigg{[} \sum_{k=t}^{T-1} r_k(s_k, \revision{\pi_t^k}(o_{t:k})) \Bigg{]}.
\end{equation*}

\revision{We then show that $\tilde R_{t\to T}^\star$ is the highest expected total reward achieved by open-loop policies, and thus independent of $q$. U}sing the Fubini-Tonelli theorem:
\begin{align}
    &\sup_{\pi^t_t, \dots, \revision{\pi_t^{T-1}}} \ \underset{\substack{s_t| \\ a_{0:t-1}}}{\mathbb E} \ \underset{\substack{q(o_t)}}{\mathbb E} \ \underset{\substack{s_{t+1:T-1}, o_{t+1:T-1} | \\ s_t,o_t}}{\mathbb E} \ \Bigg{[} \sum_{k=t}^{T-1} r_k(s_k, \revision{\pi_t^k}(o_{t:k})) \Bigg{]} \nonumber\\
    = &\sup_{\pi^t_t, \dots, \revision{\pi_t^{T-1}}} \ \underset{\substack{q(o_t)}}{\mathbb E} \ \underset{\substack{s_t| \\ a_{0:t-1}}}{\mathbb E}  \ \underset{\substack{s_{t+1:T-1}, o_{t+1:T-1} | \\ s_t,o_t}}{\mathbb E} \ \Bigg{[} \sum_{k=t}^{T-1} r_k(s_k, \revision{\pi_t^k}(o_{t:k})) \Bigg{]} \nonumber\\
    = &\sup_{\pi^t_{t+1}\dots, \revision{\pi_t^{T-1}}} \ \Bigg{[} \sup_{\pi^t_t} \ \underset{\substack{q(o_t)}}{\mathbb E} \ \underset{\substack{s_t| \\ a_{0:t-1}}}{\mathbb E}  \ \underset{\substack{s_{t+1:T-1}, o_{t+1:T-1} | \\ s_t,o_t}}{\mathbb E} \ \Bigg{[} \sum_{k=t}^{T-1} r_k(s_k, \revision{\pi_t^k}(o_{t:k})) \Bigg{]} \ \Bigg{]} \nonumber\\
    \leq &\sup_{\pi^t_{t+1}\dots, \revision{\pi_t^{T-1}}} \ \Bigg{[} \underset{\substack{q(o_t)}}{\mathbb E} \ \sup_{\pi^t_t} \ \underset{\substack{s_t| \\ a_{0:t-1}}}{\mathbb E}  \ \underset{\substack{s_{t+1:T-1}, o_{t+1:T-1} | \\ s_t,o_t}}{\mathbb E} \ \Bigg{[} \sum_{k=t}^{T-1} r_k(s_k, \revision{\pi_t^k}(o_{t:k})) \Bigg{]} \ \Bigg{]}. \label{eq:Rt tilde star bound}
\end{align}

Notice that:
\begin{align}
    &\underset{\substack{q(o_t)}}{\mathbb E} \ \sup_{\pi^t_t} \ \underset{\substack{s_t| \\ a_{0:t-1}}}{\mathbb E}  \ \underset{\substack{s_{t+1:T-1}, o_{t+1:T-1} | \\ s_t,o_t}}{\mathbb E} \ \Bigg{[} \sum_{k=t}^{T-1} r_k(s_k, \revision{\pi_t^k}(o_{t:k})) \Bigg{]} \nonumber\\
    =  &\underset{\substack{q(o_t)}}{\mathbb E} \ \sup_{\pi^t_t} \ \underset{\substack{s_t| \\ a_{0:t-1}}}{\mathbb E}  \  \Bigg{[} r_t(s_t, \pi^t_t(o_t)) + \underset{\substack{s_{t+1}, o_{t+1}| \\ s_t,\pi^t_t(o_t)}}{\mathbb E} \Bigg{[} r_{t+1}(s_{t+1}, \revision{\pi_{t}^{t+1}}(o_{t:t+1})) + \dots \Bigg{]} \dots \Bigg{]} \label{eq:sup policy}\\
    = &\underset{\substack{q(o_t)}}{\mathbb E} \ \underbrace{\sup_{a_t} \ \underset{\substack{s_t| \\ a_{0:t-1}}}{\mathbb E}  \  \Bigg{[} r_t(s_t, a_t) + \underset{\substack{s_{t+1}, o_{t+1}| \\ s_t,a_t}}{\mathbb E} \Bigg{[} r_{t+1}(s_{t+1}, \pi^{t+1}_{t+1}(o_{t+1})) + \dots \Bigg{]} \dots \Bigg{]}}_\text{Does not depend on $o_t$} \label{eq:sup action}\\
    = &\sup_{a_t} \ \underset{\substack{s_t| \\ a_{0:t-1}}}{\mathbb E}  \  \Bigg{[} r_t(s_t, a_t) + \underset{\substack{s_{t+1}, o_{t+1}| \\ s_t,a_t}}{\mathbb E} \Bigg{[} r_{t+1}(s_{t+1}, \pi^{t+1}_{t+1}(o_{t+1})) + \dots \Bigg{]} \dots \Bigg{]}.\nonumber
\end{align}

Here, \eqref{eq:sup action} follows  \eqref{eq:sup policy} since $q$ is a fixed distribution that does not depend on the state. 

We thus see that \eqref{eq:Rt tilde star bound} equals:
\begin{align*}
    &\sup_{\pi^t_{t+1}\dots, \revision{\pi_t^{T-1}}} \ \underbrace{\Bigg{[} \sup_{a_t} \ \underset{\substack{s_t| \\ a_{0:t-1}}}{\mathbb E}  \  \Bigg{[} r_t(s_t, a_t) + \underset{\substack{s_{t+1}, o_{t+1}| \\ s_t,a_t}}{\mathbb E} \Bigg{[} r_{t+1}(s_{t+1}, \pi^{t+1}_{t+1}(o_{t+1})) + \dots \Bigg{]} \dots \Bigg{]} \Bigg{]}}_\text{Does not depend on $o_t$} \\
    = &\sup_{\pi^{t+1}_{t+1}\dots, \revision{\pi_{t+1}^{T-1}}} \ \Bigg{[} \sup_{a_t} \ \underset{\substack{s_t| \\ a_{0:t-1}}}{\mathbb E}  \  \Bigg{[} r_t(s_t, a_t) + \underset{\substack{s_{t+1}, o_{t+1}| \\ s_t,a_t}}{\mathbb E} \Bigg{[} r_{t+1}(s_{t+1}, \pi^{t+1}_{t+1}(o_{t+1})) + \dots \Bigg{]} \dots \Bigg{]} \Bigg{]} \\
    = &\sup_{\pi^{t+1}_{t+1}\dots, \revision{\pi_{t+1}^{T-1}}} \ \Bigg{[} \sup_{a_t} \ \underset{\substack{s_t| \\ a_{0:t-1}}}{\mathbb E} \Bigg{[} r_t(s_t, a_t) \Bigg{]} + \underset{\substack{s_{t+1:T-1}, o_{t+1:T-1} | \\ a_{0:t}}}{\mathbb E} \ \Bigg{[} \sum_{k=t+1}^{T-1} r_k(s_k, \revision{\pi_{t+1}^k}(o_{t+1:k})) \Bigg{]} \Bigg{]} \\
     = &\sup_{\pi^{t+1}_{t+1}\dots, \revision{\pi_{t+1}^{T-1}}} \ \revision{R_{t\to T}^\perp} \\
     = & \ \revision{R_{t\to T}^{\perp \star}}.
\end{align*}

We have thus proved that $\revision{\tilde R_{t\to T}^\star} \leq \revision{R_{t\to T}^{\perp \star}}$ (indeed, since open-loop policies are special cases of feedback policies, we also have $\revision{\tilde R_{t\to T}^\star} \geq \revision{R_{t\to T}^{\perp \star}}$ and thus $\revision{\tilde R_{t\to T}^\star} = \revision{R_{t\to T}^{\perp \star}}$). \revision{The RHS of \eqref{eq:Rt bound} then becomes:
\begin{equation*}
    \sup_{\pi^t_t, \dots, \revision{\pi_t^{T-1}}} \ \revision{R_{t\to T}} \  \leq \ \finv \Big{(} \revision{R_{t\to T}^{\perp \star}} \ | \ \ \revision{\mathbb J^q} \Big{)}.
\end{equation*}}

\revision{Now, taking the infimum over the state-independent distribution $q$ on both sides and using monotonicity of $f$-inverse (proposition \ref{prop:f inverse properties}):
\begin{equation}
    \sup_{\pi^t_t, \dots, \pi_t^{T-1}} \ R_{t\to T} \  = \inf_q \sup_{\pi^t_t, \dots,\pi_t^{T-1}} \ R_{t\to T} \ \leq \ \finv \Big{(} R_{t\to T}^{\perp \star} \ | \ \ \inf_q\mathbb J^q \Big{)}.\label{eq:prop2 inf J}
\end{equation}}

From the \revision{definition} of $f$-informativity, \citep{csiszar_class_1972}, we note that: 
\begin{equation*}
    \I_f(o_t;s_t) := \inf_q \ \underset{\substack{s_t| \\ a_{0:t-1}}}{\mathbb E} \mathbb D_f\Big{(}\sigma_t(o_t|s_t) \| q(o_t) \Big{)} \revision{=\inf_q\mathbb J^q}.
\end{equation*}

\revision{Combining with \eqref{eq:prop2 inf J}, we obtain the desired result:}

\begin{equation*}
\label{eq:sup Rt bound}
\sup_{\pi^t_t, \dots, \revision{\pi_t^{T-1}}} \ \revision{R_{t\to T}} \  \leq \ \finv \Big{(} \revision{R_{t\to T}^{\perp \star}} \ | \ \ \I_f(o_t; s_t) \Big{)}.
\end{equation*}
\end{proof}

\thmmultistep*
\begin{proof}
Using (backwards) induction, we prove that for all $t=T-1,\dots,0$, 
\begin{equation}
\label{eq:algorithm bound}
    \sup_{\pi^t_t, \dots, \revision{\pi_t^{T-1}}} \ \revision{R_{t\to T}} \leq \bar{R}_t(a_{0:t-1}), \ \forall a_{0:t-1}.
\end{equation}

Thus, in particular, 
\begin{equation*}
    \revision{R_{0\to T}^\star} = \sup_{\pi^0_0, \dots, \revision{\pi_0^{T-1}}} \revision{\ R_{0\to T} \leq \bar{R}_{0\to T}}.
\end{equation*}

We prove \eqref{eq:algorithm bound} by backwards induction starting from $t=T-1$. In particular, Proposition \ref{prop:recursion} leads to the inductive step. 

We first prove the base step of induction using $t=T-1$. Using \eqref{eq:history-dependent induction}, we see:
\begin{equation}
\label{eq:induction base}
        \sup_{\pi^{T-1}_{T-1}} \ \revision{R_{T-1\to T}} \  \leq \ \finv \Big{(} \revision{R_{T-1\to T}^{\perp \star}} \ | \ \ \I_f(o_{T-1}; s_{T-1}) \Big{)}.
\end{equation}

By definition (see \eqref{eq:Rt perp star definition}), 
\begin{align}
    \revision{R_{T-1\to T}^{\perp \star}} &= \sup_{a_{T-1}} \ \underset{s_{T-1}|a_{0:T-2}}{\mathbb E} \Big{[} r_{T-1}(s_{T-1}, a_{T-1}) \Big{]} + \underset{= 0}{\underbrace{\revision{R_{t\to T}}}} \nonumber \\
    &= \sup_{a_{T-1}} \ \underset{s_{T-1}|a_{0:T-2}}{\mathbb E} \Big{[} r_{T-1}(s_{T-1}, a_{T-1}) \Big{]} \nonumber \\
    &= \revision{\bar{R}_{T-1\to T}^{\perp \star}}. \nonumber
\end{align}

Combining this with \eqref{eq:induction base} and the monotonicity of the $f$-inverse (Proposition \ref{prop:f inverse properties}), we see:
\begin{align*}
    \sup_{\pi^{T-1}_{T-1}} \ \revision{R_{T-1\to T}} \  &\leq \ \finv \Big{(} \revision{\bar{R}_{T-1\to T}^{\perp \star}} \ | \ \ \I_f(o_{T-1}; s_{T-1}) \Big{)} \\
    &= \revision{\bar{R}_{T-1\to T}}(a_{0:T-2}).
\end{align*}

Next, we prove the induction step. Suppose it is the case that for $t \in \{0,\dots,T-2\}$, we have
\begin{equation}
\label{eq:induction hypothesis}
    \sup_{\pi^{t+1}_{t+1}, \dots, \revision{\pi_{t+1}^{T-1}}} \ \revision{R_{t+1\to T}} \leq \revision{\bar R_{t+1\to T}}(a_{0:t}).
\end{equation}

We then need to show that
\begin{equation}
\label{eq:induction WTS}
    \sup_{\pi^{t}_{t}, \dots, \revision{\pi_{t}^{T-1}}} \ \revision{R_{t\to T}} \leq \revision{\bar R_{t\to T}}(a_{0:t-1}).
\end{equation}

To prove this, we first observe that
\begin{align*}
    \revision{R_{t\to T}^{\perp \star}} &:= \sup_{\pi_{t+1}^{t+1},\dots,\revision{\pi_{t+1}^{T-1}}} \  \sup_{a_t} \ \Bigg{[} \underset{s_t|a_{0:t-1}}{\mathbb E} \Big{[} r_t(s_t, a_t) \Big{]} + \revision{R_{t+1\to T}} \Bigg{]} \nonumber \\
    &= \sup_{a_t} \ \Bigg{[} \underset{s_t|a_{0:t-1}}{\mathbb E} \Big{[} r_t(s_t, a_t) \Big{]} + \sup_{\pi_{t+1}^{t+1},\dots,\revision{\pi_{t+1}^{T-1}}} \ \revision{R_{t+1\to T}} \Bigg{]}. \nonumber
\end{align*}

Combining this with the induction hypothesis \eqref{eq:induction hypothesis}, we see
\begin{align*}
    \revision{R_{t\to T}^{\perp \star}} &\leq \sup_{a_t} \ \Bigg{[} \underset{s_t|a_{0:t-1}}{\mathbb E} \Big{[} r_t(s_t, a_t) \Big{]} + \revision{\bar R_{t+1\to T}}(a_{0:t}) \Bigg{]} \\
    &=: \revision{\bar{R}_{t\to T}^{\perp \star}}.
\end{align*}

Finally, combining this with \eqref{eq:history-dependent induction} and the monotonicity of the $f$-inverse (Prop. \ref{prop:f inverse properties}), we obtain the desired result \eqref{eq:induction WTS}:
\begin{align*}
  \sup_{\pi^t_t, \dots, \revision{\pi_t^{T-1}}} \ \revision{R_{t\to T}} \  &\leq \ (T-t) \cdot \finv \Bigg{(} \frac{\revision{\bar{R}_{t\to T}^{\perp \star}}}{T-t} \ | \ \ \I_f(o_t; s_t) \Bigg{)} \\
  &= \revision{\bar R_{t\to T}}(a_{0:t-1}).
\end{align*}
\end{proof}

\section{Chernoff-Hoeffding Bound}
\label{app:hoeffding}

In our numerical examples (Section \ref{sec:examples}), we utilize a slightly tighter version of Hoeffding's inequality than the one presented in Theorem \ref{thm:hoeffding}. In particular, we use the following Chernoff-Hoeffding inequality (see Theorem 5.1 by \cite{Mulzer18}).

\begin{theorem}[Chernoff-Hoeffding inequality \citep{Mulzer18}]
\label{thm:chernoff hoeffding}
Let $z$ be a random variable bounded within $[0,1]$, and let $z_1,\dots, z_n$ denote i.i.d. samples. Then, with probability at least $1 - \delta$ (over the sampling of $z_1,\dots,z_n$), the following bound holds with probability at least $1 - \delta$:
\begin{equation}
\label{eq:chernoff hoeffding f bound}
\mathbb D \Bigg{(} \frac{1}{n}\sum_{i=1}^n z_i \ \Big{\|} \ \mathbb{E}[z])\Bigg{)} \leq \frac{\log(2/\delta)}{n}. 
\end{equation}
\end{theorem}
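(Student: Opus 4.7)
The plan is to establish the Chernoff-Hoeffding bound via the classical Cram\'er-Chernoff method, reducing the $[0,1]$-bounded case to the Bernoulli case (for which the KL-divergence rate function is exact) and then inverting the two-sided deviation bound to obtain the high-probability statement.

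First, I would set $\mu := \mathbb{E}[z]$ and $\bar{z}_n := \frac{1}{n}\sum_{i=1}^n z_i$. Fix an arbitrary $t \in (\mu, 1]$. By Markov's inequality applied to $e^{\lambda n \bar{z}_n}$ for $\lambda > 0$, and by independence of the samples,
\begin{equation*}
\Pr(\bar{z}_n \geq t) \;\leq\; e^{-\lambda n t}\,\bigl(\mathbb{E}[e^{\lambda z}]\bigr)^{n}.
\end{equation*}
The key lemma I would invoke is that for any random variable $z \in [0,1]$ with mean $\mu$, the moment generating function is dominated by that of a Bernoulli with mean $\mu$: $\mathbb{E}[e^{\lambda z}] \leq 1 - \mu + \mu e^{\lambda}$. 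This follows by writing $z = z\cdot 1 + (1-z)\cdot 0$ and using convexity of $e^{\lambda \cdot}$, so that $e^{\lambda z} \leq (1-z) + z e^{\lambda}$; taking expectations yields the bound.

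Substituting and optimizing over $\lambda > 0$, the supremum of $\lambda t - \log(1 - \mu + \mu e^{\lambda})$ is attained at $e^{\lambda^\star} = \frac{t(1-\mu)}{\mu(1-t)}$, giving
\begin{equation*}
\Pr(\bar{z}_n \geq t) \;\leq\; \exp\!\bigl(-n\,\mathbb{D}(t \,\|\, \mu)\bigr),
\end{equation*}
where $\mathbb{D}(\cdot\|\cdot)$ denotes the Bernoulli KL divergence (matching the notation in \eqref{eq:chernoff hoeffding f bound}). A symmetric argument using $-z$ in place of $z$ yields the lower-tail analogue: $\Pr(\bar{z}_n \leq t) \leq \exp(-n\,\mathbb{D}(t\|\mu))$ for $t \in [0,\mu)$.

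The final step combines the two tails. Define the (random) event $E_\epsilon := \{\mathbb{D}(\bar{z}_n \| \mu) \geq \epsilon\}$. By monotonicity of $\mathbb{D}(\cdot\|\mu)$ on each side of $\mu$, $E_\epsilon$ decomposes as $\{\bar{z}_n \geq t_+\} \cup \{\bar{z}_n \leq t_-\}$ where $t_\pm$ are the unique solutions of $\mathbb{D}(t_\pm \| \mu) = \epsilon$ with $t_- < \mu < t_+$. Applying both one-sided Chernoff bounds at $t = t_\pm$ and a union bound gives
\begin{equation*}
\Pr\bigl(\mathbb{D}(\bar{z}_n \| \mu) \geq \epsilon\bigr) \;\leq\; 2 e^{-n\epsilon}.
\end{equation*}
Setting the right-hand side equal to $\delta$ yields $\epsilon = \log(2/\delta)/n$, which is exactly the statement of the theorem. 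The main obstacle I anticipate is the Bernoulli MGF-domination lemma and the clean inversion of the Legendre transform; both are standard but need to be executed carefully so that the constants in $\log(2/\delta)$ (rather than the looser $\log(1/\delta)$) are preserved.
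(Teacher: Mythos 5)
Your proof is correct, but note that the paper does not actually prove this statement at all: Theorem~\ref{thm:chernoff hoeffding} is imported by citation from Theorem~5.1 of \cite{Mulzer18}, so there is no in-paper argument to compare against. Your self-contained Cram\'er--Chernoff derivation is sound and is essentially the classical proof underlying the cited result: the Markov/exponential-moment step is standard; the MGF-domination lemma $\mathbb{E}[e^{\lambda z}] \leq 1-\mu+\mu e^{\lambda}$ via the convexity bound $e^{\lambda z}\leq (1-z)+z e^{\lambda}$ is the correct reduction to the Bernoulli case; and the Legendre transform computes cleanly, since at $e^{\lambda^\star}=\tfrac{t(1-\mu)}{\mu(1-t)}$ one gets $1-\mu+\mu e^{\lambda^\star}=\tfrac{1-\mu}{1-t}$ and hence the exponent $t\log\tfrac{t}{\mu}+(1-t)\log\tfrac{1-t}{1-\mu}=\mathbb{D}(t\|\mu)$, exactly the binary KL appearing in \eqref{eq:chernoff hoeffding f bound}. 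The two-tail assembly is also right: monotonicity of $\mathbb{D}(\cdot\|\mu)$ on either side of $\mu$ decomposes the event $\{\mathbb{D}(\bar{z}_n\|\mu)\geq\epsilon\}$ into two tail events, each bounded by $e^{-n\epsilon}$, and the union bound produces precisely the factor $2$ that yields $\log(2/\delta)$ rather than $\log(1/\delta)$. The only points worth tightening are trivial: the thresholds $t_\pm$ need not exist when $\epsilon$ exceeds $\mathbb{D}(1\|\mu)$ or $\mathbb{D}(0\|\mu)$, in which case the corresponding tail event is empty and the bound only improves; and the degenerate cases $\mu\in\{0,1\}$ hold vacuously since then $\bar{z}_n=\mu$ almost surely. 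Neither affects the validity of the argument.
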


We can obtain an upper bound on $\mathbb{E}[z]$ using \eqref{eq:chernoff hoeffding f bound} as follows:
\begin{equation}
\label{eq:chernoff hoeffding bound}
\mathbb{E}[z] \leq \ \sup \ \Bigg{\{} p \in [0,1) \ \Bigg{|} \ \fb \Big{(} \frac{1}{n}\sum_{i=1}^n z_i \ \Big{\|} \ p\Big{)} \leq \frac{\log(2/\delta)}{n} \Bigg{\}}.
\end{equation}
The optimization problem in the RHS of \eqref{eq:chernoff hoeffding bound} is analogous to the $f$-inverse defined in Section \ref{sec:kl inverse}, and can be thought of as a ``right" $f$-inverse (instead of a ``left" $f$-inverse). In the case of KL divergence, (similar to the $f$-inverse in Section \ref{sec:kl inverse}), we can solve the optimization problem in \eqref{eq:chernoff hoeffding bound} using geometric programming.

\end{appendices}

\end{document}